\documentclass[accepted]{uai2025} 
                        
\usepackage[american]{babel}
\usepackage{csquotes}

\usepackage{natbib} 
\usepackage{mathtools} 
\usepackage{amssymb, amsthm}
\usepackage{subcaption}
\usepackage{relsize}
\usepackage{multirow}
\usepackage[ruled,vlined,linesnumbered]{algorithm2e}
\usepackage{float}

\usepackage{booktabs} 
\usepackage{tikz} 

\newtheorem{theorem}{Theorem}[section]

\newtheorem{remark}{Remark}

\title{RCAP: Robust, Class-Aware, Probabilistic Dynamic Dataset Pruning}

\author[1]{\href{mailto:<atif.hit.hassan@kgpian.iitkgp.ac.in>?Subject=Your UAI 2025 paper}{Atif Hassan}{}}
\author[2]{Swanand Khare}
\author[1]{Jiaul H. Paik}
\affil[1]{%
    Department of Artificial Intelligence\\
    IIT Kharagpur\\
    Kharagpur, West Bengal, India
}
\affil[2]{%
    Department of Mathematics\\
    IIT Kharagpur\\
    Kharagpur, West Bengal, India
}
  
\begin{document}
\maketitle

\begin{abstract}
Dynamic data pruning techniques aim to reduce computational cost while minimizing information loss by periodically selecting representative subsets of input data during model training. 
However, existing methods often struggle to maintain strong worst-group accuracy, particularly at high pruning rates, across balanced and imbalanced datasets. 
To address this challenge, we propose RCAP, a Robust, Class-Aware, Probabilistic dynamic dataset pruning algorithm for classification tasks. 
RCAP applies a closed-form solution to estimate the fraction of samples to be included in the training subset for each individual class. 
This fraction is adaptively adjusted in every epoch using class-wise aggregated loss.
Thereafter, it employs an adaptive sampling strategy that prioritizes samples having high loss for populating the class-wise subsets.
We evaluate RCAP on six diverse datasets ranging from class-balanced to highly imbalanced using five distinct models across three training paradigms: training from scratch, transfer learning, and fine-tuning. 
Our approach consistently outperforms state-of-the-art dataset pruning methods, achieving superior worst-group accuracy at all pruning rates. Remarkably, with only $10\%$ data, RCAP delivers $>1\%$ improvement in performance on class-imbalanced datasets compared to full data training while providing an average $8.69\times$ speedup. 
The code can be accessed at \url{https://github.com/atif-hassan/RCAP-dynamic-dataset-pruning}
\end{abstract}

\section{Introduction}\label{sec:intro}
The remarkable success of deep learning across domains such as computer vision \citep{DBLP:conf/cvpr/HeZRS16,DBLP:conf/iclr/DosovitskiyB0WZ21}, natural language processing \citep{DBLP:conf/nips/BrownMRSKDNSSAA20,radford2019language,DBLP:journals/corr/abs-2303-08774}, and speech \citep{DBLP:conf/icml/RadfordKXBMS23,DBLP:conf/nips/BaevskiZMA20} is largely fueled by training massive networks on datasets with millions or even billions of samples. 
However, this scale of training demands exorbitant computational resources over prolonged periods, incurring unsustainable monetary costs \citep{DBLP:conf/icml/MindermannBRS0X22}. 
These expenses not only limit accessibility for resource-constrained researchers but also discourage investment in model refinement activities like hyper-parameter tuning and architecture search. 
Consequently, reducing training costs has emerged as a critical research challenge in deep learning.

One promising approach to mitigate these costs is to reduce the number of training updates which can be achieved by shrinking the dataset size.
Approaches such as dataset distillation \citep{DBLP:conf/wacv/ZhaoB23,DBLP:conf/cvpr/Cazenavette00EZ22a}, coreset selection \citep{DBLP:conf/icml/XiaLZWWL24,DBLP:conf/icml/YangCGZLZN24,DBLP:conf/iclr/ZhengLL023} and data pruning \citep{DBLP:conf/cvpr/ZhangDLXZ24,DBLP:conf/icml/YangCGZLZN24,okanovic2024repeated,qin2024infobatch} have garnered attention with data pruning striking the best balance between performance and training cost by removing the least informative examples \citep{paul2021deep}. 

Pruning methods typically use scoring mechanisms to identify the most informative samples for training. 
Static pruning techniques \citep{paul2021deep,DBLP:conf/iclr/0006XP0S023,DBLP:conf/cvpr/ZhangDLXZ24,DBLP:conf/icml/YangCGZLZN24} select a fixed subset prior to training, discarding the remaining data to reduce storage and computation. 
However, their scoring mechanism relies on training a model for multiple epochs before determining sample importance. 
This process is not only expensive but also model-dependent, thus restricting its applicability to diverse downstream architectures.
Dynamic dataset pruning, in contrast, recomputes subsets during training, leveraging accessible metrics like per-sample loss to adaptively select data for each epoch \citep{raju2021accelerating,qin2024infobatch,okanovic2024repeated}. 
This dynamic approach ensures that the sampled subset evolves with model training, offering near loss-less average performance even at high pruning rates while reducing overall training time.

\subsection{Motivation}
Developing robust models is a crucial aspect of real-world AI applications as it mitigates bias against under-represented/minority groups.
However, existing state-of-the-art dynamic pruning algorithms, such as RS2 \citep{okanovic2024repeated} and InfoBatch \citep{qin2024infobatch}, overlook a critical metric: worst-group accuracy, essential for evaluating model robustness, especially in class-imbalanced datasets. 
Moreover, even in class-balanced datasets these methods often neglect class-specific hardness, achieving strong average performance but underperforming on harder or minority groups. 
For instance in CIFAR10, certain classes, such as cats and dogs, accumulate higher loss in comparison to other groups, leading to non-robust models with poor worst-class performance \citep{DBLP:journals/corr/abs-2404-05579}.
Thus, we aim to answer the following question, 
\begin{displayquote}
\textquote{\textit{Does incorporating class hardness, while performing data pruning, enhance model robustness across both balanced and imbalanced data settings?}}
\end{displayquote}

\subsection{Our Contribution}
We propose RCAP, a novel, Robust, Class-Aware, Probabilistic dynamic dataset pruning algorithm for classification tasks.
RCAP automatically determines the appropriate subset size for individual classes through a parameter which is updated in every epoch based on the aggregated class-wise loss of the previous epoch.
Thereafter, RCAP prioritizes samples with higher loss for each subset by sampling from a distribution over per-sample losses.

We evaluate RCAP across a diverse set of datasets, spanning various scales and class imbalance levels.
These include class-balanced datasets of medium scale (CIFAR10 and CIFAR100), a class-balanced large-scale dataset (ImageNet), a moderately imbalanced small-scale dataset (Waterbirds), a relatively high imbalance medium-scale dataset (CelebA), and an extremely imbalanced large-scale dataset (iNaturalist). 
Our experiments employ five distinct network architectures, ResNet18, ResNet50, EfficientNetV2, Dinov2 and EfficientFormerV2 across three training paradigms: training from scratch, transfer learning, and fine-tuning.

We compare against seven state-of-the-art baselines, including both dynamic and static data pruning techniques.
To the best of our knowledge, this is the first comprehensive evaluation of dynamic dataset pruning algorithms in both class-balanced and imbalanced data settings in terms of worst-group performance.
The results demonstrate that RCAP consistently surpasses all methods, achieving significantly superior worst-group accuracy, especially at high pruning rates across all architectures, datasets and training paradigms.

\section{Preliminaries}
\subsection{Notations}
\noindent We denote $\mathcal{S}=\{(X_i,y_i)\}_{i=1}^{n}$ as a labelled set of input and target pairs. 
Here, $X_i\in\mathcal{X}$ and $y_i\in\mathbb{N}_c$ where $\mathcal{X}$ is the input space while $\mathbb{N}_c=\{1,\cdots, c\}$ with $c$ being the number of classes and $n$ the total number of samples. 
Here, $(\mathbf{X}, Y)\sim\mathcal{P}_\mathcal{D}$ where $\mathcal{P}_\mathcal{D}$ is the underlying distribution. 
Given a label $j\in\mathbb{N}_c$, define $\mathcal{S}_j=\{(X_k,y_k)\}_{k=1}^{n_j}$ where $\forall k, y_k=j$. 
Then clearly, $\mathcal{S}=\bigcup_{j=1}^{c}\mathcal{S}_j$ and $n=\sum_jn_j$. 
Let $r\in(0,1)$ be the pruning rate supplied by the user such that the total number of samples to be selected is $(1-r)n$. 
We define the retain set, $\mathcal{S}^t\subset \mathcal{S}$ as the subset of samples selected for training at epoch $t$ where $\lvert\mathcal{S}^t\rvert=(1-r)n$. 
Here, $t=\{1,2,\cdots,T\}$ where $T$ is the total number of epochs.
The retain set comprises class-wise subsets, $\mathcal{S}_j^t=\mathcal{S}^t\cap\mathcal{S}_j, ~\forall j\in\mathbb{N}_c$ where $\lvert\mathcal{S}_j^t\rvert = \alpha^t_jn_j$ such that, $\sum_{j=1}^c\alpha_j^{t}n_j=(1-r)n$. 
Here, $\alpha_j^{t}$ is the fraction of samples to be selected to form the subset for class $j$ at epoch $t$.
The set of unused samples, $\mathcal{S}\setminus \mathcal{S}^t$, at epoch $t$ form the pruned set.
Let $f_\theta(\cdot)$ be any arbitrary model parameterized by $\theta\in\mathbb{R}^m$. 
Let $\widetilde{f}_{\theta^t}(X_i)=\sigma\left(f_{\theta^t}(X_i)\right)\in\mathbb{R}^c$ at epoch $t$ such that $\forall t,\forall i,~\left\lVert \widetilde{f}_{\theta^t}(X_i)\right\rVert_1=1$. 
Here, $\sigma(\cdot)$ is the Softmax function.
The loss function is denoted as, $L:\mathbb{R}^c\times\mathbb{N}_c\rightarrow \mathbb{R}$ with its value at epoch $t$ for some input $X_i$ being represented as $L\left(\widetilde{f}_{\theta^t}\left(X_i\right),y_i\right)$. 
For brevity, we represent the loss at epoch $t$ for some input $X_i$ as $L\left(\widetilde{f}_{\theta^t}(X_i)\right)$.
The derivative of $L\left(\widetilde{f}_{\theta^t}\left(X_i\right)\right)$ at epoch $t$ for any input $X_i$ is denoted as $\nabla_{\theta^t} L\left(\widetilde{f}_{\theta^t}(X_i)\right)$. 
Let $\mathcal{B}^p$ denote a batch of examples at iteration $p$ with $\lvert\mathcal{B}^p\rvert=b$ being the batch size. 
Then the total number of iterations at epoch $t$ over the entire dataset and a subset, $\mathcal{S}^t\subset\mathcal{S}$, are $\left\lceil|\mathcal{S}|/b\right\rceil$ and $\left\lceil|\mathcal{S}^t|/b\right\rceil$, respectively. 
We use $\eta$ to denote the learning rate.

\section{RCAP}\label{sec:pddp}
An effective data pruning algorithm should account for class-wise performance when selecting samples for the retain set. 
This is because the performance of individual groups/classes vary for a given classification task, thus requiring non-uniform representation in the selected subset.
Some methods such as Data Diet \citep{paul2021deep}, implicitly address this by inducting high-error samples into the retain set. 
However, at high pruning rates, such strategies risk discarding classes with consistently low-error samples. 
Other techniques such as MetriQ \citep{DBLP:journals/corr/abs-2404-05579} incorporate class-wise performance but rely on ad-hoc rules to determine sample allocation. 
To overcome these limitations, we propose the following two fundamental problems that any effective data pruning algorithm should solve:
\begin{itemize}
    \item Determining the appropriate subset size for each class in the retain set.
    \item Selecting the most informative samples within each subset.
\end{itemize}
We address the first problem by adaptively adjusting the class-wise subset size in each epoch, as formalized in Theorem \ref{theorem:first_and_last} (Section \ref{sec:module1}). 
The second problem is tackled through a novel epoch-wise adaptive sampling strategy, detailed in Section \ref{sec:module2}.

\subsection{Adaptive per-Class Subset Size}\label{sec:module1}
Allocating more training samples to classes that a model perceives as difficult can lead to performance improvements on underrepresented or challenging groups \cite{DBLP:journals/corr/abs-2404-05579}. 
Theorem \ref{theorem:first_and_last} formalizes this intuition, demonstrating that classes with higher loss values should have a proportionally larger representation in the training subset. 

\begin{theorem}
    Let the given total empirical error be denoted by,
    \begin{equation}
        \begin{split}
            &E = \sum_j \frac{p_j}{\alpha_jn_j}\widetilde{E}_j\\
            \text{ where }~~&\widetilde{E}_j = \sum_{X_i\in\widetilde{\mathcal{S}}_j}L\left(\widetilde{f}_{\theta}\left(X_i\right)\right) ,~~\widetilde{\mathcal{S}}_j\subseteq\mathcal{S}_j~~\text{ and }~~p_j=\frac{n_j}{n}
        \end{split}
        \nonumber
    \end{equation}
    Then, under the assumption of full batch gradient descent, the optimal solution to the minimization problem
    \begin{equation}
        \underset{\alpha_j}{\min}~ E
        \nonumber
    \end{equation}
    \begin{equation}
    \begin{split}
        &~~~\textrm{ subject to }~~~
        \sum_{j=1}^c\alpha_jn_j=(1-r)n\\
        &~~~\text{ is given by }~~~
        \widehat{\alpha}_j = \frac{\sqrt{p_j\widetilde{E}_j}}{\underset{j}{\sum}\sqrt{p_j\widetilde{E}_j}}(1-r)\frac{n}{n_j}
    \end{split}
    \nonumber
    \end{equation}
    \label{theorem:first_and_last}
\end{theorem}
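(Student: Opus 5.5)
The plan is to treat this as a constrained convex optimization in the variables $\alpha_1,\dots,\alpha_c$, with the quantities $\widetilde{E}_j$ held fixed. The full-batch gradient descent assumption is what licenses this. Under full-batch updates, the model parameters $\theta$ (and hence the per-sample losses collected in $\widetilde{E}_j$) are determined by the current iterate and do not depend on how we allocate the subset sizes. So $E$ is a function of the $\alpha_j$ alone, of the form $E(\alpha)=\sum_j a_j/\alpha_j$ with
\begin{equation}
a_j = \frac{p_j\widetilde{E}_j}{n_j}\ge 0. \nonumber
\end{equation}
Writing $\beta_j=\alpha_jn_j$, the problem becomes: minimize $\sum_j p_j\widetilde{E}_j/\beta_j$ subject to $\sum_j\beta_j=(1-r)n$ with $\beta_j>0$. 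The objective is a sum of strictly convex functions on the positive orthant (assuming $\widetilde{E}_j>0$) and the constraint is linear. Hence any stationary point of the Lagrangian is the unique global minimizer.

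Next I will form the Lagrangian
\begin{equation}
\mathcal{L}(\beta,\lambda)=\sum_j \frac{p_j\widetilde{E}_j}{\beta_j}+\lambda\Big(\sum_j\beta_j-(1-r)n\Big). \nonumber
\end{equation}
Setting $\partial\mathcal{L}/\partial\beta_j=0$ gives $p_j\widetilde{E}_j/\beta_j^2=\lambda$, so $\beta_j=\sqrt{p_j\widetilde{E}_j}/\sqrt{\lambda}$. Substituting into the constraint gives
\begin{equation}
\frac{1}{\sqrt{\lambda}}=\frac{(1-r)n}{\sum_k\sqrt{p_k\widetilde{E}_k}}. \nonumber
\end{equation}
Dividing $\beta_j$ by $n_j$ then yields exactly $\widehat{\alpha}_j$. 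As an independent check of optimality, I will use the Cauchy--Schwarz inequality:
\begin{equation}
\Big(\sum_j \frac{p_j\widetilde{E}_j}{\beta_j}\Big)\Big(\sum_j\beta_j\Big)\ge\Big(\sum_j\sqrt{p_j\widetilde{E}_j}\Big)^2. \nonumber
\end{equation}
Equality holds iff $\beta_j\propto\sqrt{p_j\widetilde{E}_j}$. This proves global optimality directly and gives the minimum value $\big(\sum_j\sqrt{p_j\widetilde{E}_j}\big)^2/((1-r)n)$.

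The main obstacle is the modeling step, not the calculus. I must justify that $\widetilde{E}_j$ can be treated as a constant independent of $\alpha_j$, even though $\widetilde{\mathcal{S}}_j$ is an arbitrary subset of $\mathcal{S}_j$ and, in practice, its size is tied to $\alpha_jn_j$. My first attempt will be to argue that under full-batch gradient descent the losses are evaluated at the fixed current $\theta$. The per-sample losses are then given, and $\widetilde{E}_j/(\alpha_jn_j)$ is read as the estimated class-average loss scaled by the inverse allocated fraction. Under this reading, $\widetilde{E}_j$ is a fixed input to the allocation problem.

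Two edge cases remain. If some $\widetilde{E}_j=0$, the formula gives $\widehat{\alpha}_j=0$, which is the infimum and is consistent with the formula. I would also remark that the box constraint $\alpha_j\le1$ is not part of the stated problem, so no clipping analysis is needed.
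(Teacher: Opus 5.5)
Your Lagrangian computation is the same as the paper's proof. The paper works directly in the $\alpha_j$ rather than in $\beta_j=\alpha_jn_j$: it imposes stationarity $-p_j\widetilde{E}_j/(\widehat{\alpha}_j^2 n_j)+\widehat{\lambda}n_j=0$, solves for $\widehat{\lambda}$ from the constraint, and treats the $\widetilde{E}_j$ as fixed constants exactly as you do. Your convexity and Cauchy--Schwarz arguments go beyond the paper, which derives only the first-order necessary conditions, because they certify that this stationary point is the unique global minimizer.
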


\begin{proof}
    Introducing the Lagrange multiplier $\lambda$, the optimization problem becomes,
    \begin{equation}
        G = E + \lambda\left(\sum_{j=1}^c\alpha_jn_j-(1-r)n\right)
        \label{eqn:full_eqn}
        \nonumber
    \end{equation}
    If $\left(\widehat{\alpha}_j, \widehat{\lambda}\right)$ is an optimal pair, then the optimality conditions imply:
    \begin{equation}
        \begin{split}
            {\frac{\partial G}{\partial {\alpha}_j}}\Bigg|_{\left(\widehat{\alpha}_j, \widehat{\lambda}\right)} &= -\frac{p_j\widetilde{E}_j}{{\widehat{\alpha}}_j^2n_j} + \widehat{\lambda} n_j = 0\\
            \implies \widehat{\alpha}_j &= \frac{\sqrt{p_j\widetilde{E}_j}}{\sqrt{\widehat{\lambda}}n_j}
        \end{split}
        \label{eqn:alpha_j}
    \end{equation}
    Substituting the value of $\widehat{\alpha}_j$ in the constraint gives us,
    \begin{equation}
        \begin{split}
            &\frac{1}{\sqrt{\widehat{\lambda}}}\sum_j\sqrt{p_j\widetilde{E}_j} = (1-r)n\\
            \implies &\frac{1}{\sqrt{\widehat{\lambda}}} = \frac{(1-r)n}{\sum_j\sqrt{p_j\widetilde{E}_j}}
        \end{split}
        \label{eqn:lambda}
    \end{equation}
    Replacing the value of $\sqrt{\widehat{\lambda}}$ from Eqn. \ref{eqn:lambda} in Eqn. \ref{eqn:alpha_j}, we get,
    \begin{equation}
        \widehat{\alpha}_j = \frac{\sqrt{p_j\widetilde{E}_j}}{\sum_j\sqrt{p_j\widetilde{E}_j}}(1-r)\frac{n}{n_j}
        \label{eqn:pre-final}
    \end{equation}
\end{proof}

\begin{remark}
    {\rm
    Equation~\ref{eqn:pre-final} provides a closed-form solution for the optimal class-wise retention fractions, indicating that classes with higher cumulative error should be allocated a larger share of the remaining data.
    In practice, when implementing Theorem~\ref{theorem:first_and_last}, we instantiate the quantities $\widetilde{E}_j$ using the class-wise errors computed at epoch $t$, i.e., $\widetilde{E}_j := \widetilde{E}_j^{\,t}$ while $\widetilde{\mathcal{S}}_j:=\mathcal{S}_j^t$, and we use the resulting closed-form expression to obtain $\widehat{\alpha}_j^{\, t+1}$ for the next epoch. 
    This history-based strategy, i.e, using losses or gradient norms from a previous epoch (or iteration) to guide the sampling distribution in the next epoch, is well established in adaptive data selection and curriculum learning methods \cite{DBLP:journals/corr/LoshchilovH15,DBLP:journals/corr/SchaulQAS15,DBLP:journals/corr/abs-1910-00762}. 
    Thus, applying Theorem 1 in the online setting, we substitute epoch-$t$ statistics as a proxy for the true (unknown) optimal statistics at epoch $t+1$, yielding,
    \begin{equation}
        \widehat{\alpha}_j^{t+1} = \frac{\sqrt{p_j\widetilde{E}_j^t}}{\sum_j\sqrt{p_j\widetilde{E}_j^t}}(1-r)\frac{n}{n_j}
    \label{eqn:final}
    \end{equation}
}
\end{remark}

\textbf{Implementation Detail:} 
Note that $\widetilde{E}^0_j$ is the class-wise aggregated loss at model initialization which determines $\widehat{\alpha}_j^1$.
Furthermore, closely inspecting Eqn. \ref{eqn:final} reveals that the condition $\widehat{\alpha}_j^{t+1}>1$ is plausible as the fraction is unconstrained. 
To mitigate this issue, we perform the following operation,
{\small
\begin{equation}
\begin{split}
    \widehat{\alpha}_j^{t+1} &= \begin{cases}
        $1$&\text{ if }~ \widehat{\alpha}_j^{t+1}>1\\
        \frac{\sqrt{p_j\widetilde{E}_j^{t}}}{\sum_j\left(\sqrt{p_j\widetilde{E}_j^{t}}\right)m_j}\frac{(1-r)n-k}{n_j}&\text{ otherwise}
    \end{cases}\\
    \text{where,}\\
    m_j &= \begin{cases}
        $1$~~&\text{ if }~ \widehat{\alpha}_j^{t+1}<1\\
        $0$~~&\text{ otherwise}
    \end{cases}\\
    k &= \sum_j \left(1-m_j\right)n_j
\end{split}
\label{eqn:reweighting_alpha}
\end{equation}
}
In doing so, we guarantee that $\widehat{\alpha}_j^{t+1}\leq 1$ with excess values being re-distributed among the remaining classes.

\subsection{Adaptive per-Class Sample Selection}\label{sec:module2}
The goal of any dynamic dataset pruning algorithm is to train a model on a carefully selected subset of data at each epoch such that the model's performance is indistinguishable from a model trained on the full dataset. 
Formally, this goal can be expressed as,
\begin{equation}
    \mathlarger{\mathop{\mathbb{E}}}_{(X_i,y_i)\sim \mathcal{P}_\mathcal{D}}\left[\left\lvert L\left(\widetilde{f}_{\widetilde{\theta}^T}\left(X_i\right)\right) - L\left(\widetilde{f}_{\theta^T}\left(X_i\right)\right)\right\rvert\right] \leq \epsilon\label{eqn:data_prune_obj}
\end{equation}
where,
{\small
\begin{align}
    \theta^T &= \theta^1 - \eta\sum_{t=1}^{T}\sum_{p=1}^{\left\lceil\frac{\lvert \mathcal{S}\rvert}{b}\right\rceil}\frac{1}{b}\sum_{\left(X_i,y_i\right)\in \mathcal{B}^p}\nabla_{\theta^{t,p}} L\left(\widetilde{f}_{\theta^{t,p}}\left(X_i\right)\right)\label{eqn:theta_recur}\\
    \widetilde{\theta}^T &= \theta^1 - \eta\sum_{t=1}^{T}\sum_{p=1}^{\left\lceil\frac{\lvert \mathcal{S}^t\rvert}{b}\right\rceil}\frac{1}{b}\sum_{\left(\widetilde{X}_i,\widetilde{y}_i\right)\in \widetilde{\mathcal{B}}^p}\nabla_{\widetilde{\theta}^{t,p}} L\left(\widetilde{f}_{\widetilde{\theta}^{t,p}}\left(\widetilde{X}_i\right)\right)\label{eqn:theta_cap_recur}
\end{align}
}

Here, $\mathcal{B}^p$ and $\widetilde{\mathcal{B}}^p$ are batches sampled from $\mathcal{S}$ and $\mathcal{S}^t$, respectively, at iteration $p$.
Here, $\theta^T$ and $\widetilde{\theta}^T$ are the parameters obtained after training on $\mathcal{S}$ and its subset, respectively.
Similarly, $\theta^{t,p}$ and $\widetilde{\theta}^{t,p}$ are the parameters obtained at epoch $t$ and iteration $p$ after training on $\mathcal{S}$ and $\mathcal{S}^t$, respectively.
We now look at the condition to achieve Eqn. \ref{eqn:data_prune_obj}. 
Let $\frac{1}{b}\sum_{\left(X_i,y_i\right)\in \mathcal{B}^p}\nabla_{\theta^{t,p}} L\left(\widetilde{f}_{\theta^{t,p}}(X_i)\right)=g^{t,p}$ and $\frac{1}{b}\sum_{\left(X_i,y_i\right)\in \widetilde{\mathcal{B}}^p}\nabla_{\widetilde{\theta}^{t,p}} L\left(\widetilde{f}_{\widetilde{\theta}^{t,p}}(X_i)\right)=\widetilde{g}^{t,p}$.
Assuming that $L$ is Lipschitz continuous having Lipschitz constant $K_1$ with respect to the change in parameters, we get:
\begin{equation}
    \left\lvert L\left(\widetilde{f}_{\widetilde{\theta}^T}(X_i)\right)-L\left(\widetilde{f}_{\theta^T}(X_i)\right)\right\rvert \leq K_1 \left\lVert\widetilde{\theta}^T - \theta^T\right\rVert_2
    \label{eqn:lipscitz}
\end{equation}
Replacing $\widetilde{\theta}^T$ and $\theta^T$ from Eqns. \ref{eqn:theta_recur} and \ref{eqn:theta_cap_recur} in Eqn. \ref{eqn:lipscitz}, and taking expectation on both sides, we get:
{\small
\begin{equation}
    \begin{split}
    &\mathlarger{\mathlarger{\mathop{\mathbb{E}}}}_{\left(X_i,y_i\right)\sim \mathcal{P}_\mathcal{D}}\left[\left\lvert L\left(\widetilde{f}_{\widetilde{\theta}^T}(X_i)\right)-L\left(\widetilde{f}_{\theta^T}(X_i)\right)\right\rvert\right] \leq\\ K_1\eta &\mathlarger{\mathlarger{\mathop{\mathbb{E}}}}_{\left(X_i,y_i\right)\sim \mathcal{P}_\mathcal{D}}\left[\left\lVert\sum_{t=1}^{T}\left(\sum_{p=1}^{\left\lceil\frac{\lvert \mathcal{S}\rvert}{b}\right\rceil}g^{t,p} - \sum_{p=1}^{\left\lceil\frac{\lvert \mathcal{S}^t\rvert}{b}\right\rceil}\widetilde{g}^{t,p}\right)\right\rVert_2\right]
    \end{split}
    \label{eqn:min_orig_data_prune}
\end{equation}
}

Hence, to achieve Eqn. \ref{eqn:data_prune_obj}, the right-hand-side in Eqn. \ref{eqn:min_orig_data_prune} needs to be minimized.
One can observe that in each epoch, the term $\sum_{p=1}^{\left\lceil\left(\lvert\mathcal{S}\rvert/b\right)\right\rceil}g^{t,p}$ is dominated by the samples with the largest gradient norm. 
We empirically find that the cross-entropy loss and the magnitude of the gradient exhibit a monotonic relation (see Section \ref{sec:simulation} in the Supplementary Materials).
This empirical relation is further reinforced by \cite{paul2021deep}, as they observe that “examples that are learned faster and maintain small error over training have a smaller GraNd score on average,” where the GraNd score is the gradient norm of a sample.
Thus, we choose to form $\mathcal{S}^t$ with high-loss samples to approximately minimize the right-hand side in Eqn. \ref{eqn:min_orig_data_prune}. 
A na\"ive approach involves sorting samples in $\mathcal{S}$ by their loss values which is computationally expensive $\left(O(\log n)\text{ per sample}\text{, e.g. \citep{paul2021deep}}\right)$.
Instead, RCAP samples from every $\mathcal{S}_j$ by defining $\mathcal{S}_j^t\subseteq \mathcal{S}_j$ as the set of examples sampled at epoch $t$ for class $j$ in the following manner.
{\small
    \begin{subequations}
        \begin{align}
            \mathcal{S}_j^{t+1} &= \{\left(X_i,y_i\right)\}_{i=1}^{\widehat{\alpha}_j^{t+1}n_j}\sim \mathcal{P}_j^{t+1}(X_i)\label{eqn:sampling1}\\
            \mathcal{P}_j^{t+1}(X_i) &= \frac{\mathlarger{\mathlarger{e}}^{\left(\phi_j^{t}(X_i)/\beta\right)}}{\sum_{X_q\in\mathcal{S}_j}\mathlarger{\mathlarger{e}}^{\left(\phi_j^{t}(X_q)/\beta\right)}}\label{eqn:sampling2}\\
            \phi_j^{t}(X_i)&=\begin{cases}
                \gamma\left(L\left(\widetilde{f}_{\theta^t}(X_i)\right), j\right) ~~&\text{ if } X_i\in \mathcal{S}_j^{t}\\
                \phi_j^{t-1}(X_i)~~&\text{ otherwise}
            \end{cases}\label{eqn:sampling3}\\
            \gamma(x, j) &= \min\left(x, \max\left(\phi_j^0(X_i)\right)\right)~~\forall X_i\in\mathcal{S}_j\label{eqn:clipping_function}\\
            \phi_j^0(X_i) &= L\left(\widetilde{f}_{\theta^0}(X_i)\right)\label{eqn:sampling4}
        \end{align}
        \label{eqn:sampling}
    \end{subequations}
}

Before training ensues, all examples are forward passed through a randomly initialized network and the corresponding loss values are stored in $\phi_j^0$ as shown in Eqn. \ref{eqn:sampling4}.
These loss values correspond to completely random predictions.
Next, $\phi_j^0$ is used to compute the aggregate class-wise losses, $\widetilde{E}_j^0$, that determine the fraction of samples to be allocated per class, $\widehat{\alpha}_j^1$, as shown in Eqn. \ref{eqn:final}. 
Next, a class-wise probability distribution is generated over the collected loss values, $\phi_j^0$, using a Softmax function with $\beta$ as the temperature hyper-parameter as shown in Eqn. \ref{eqn:sampling2}. 
The training subset is then generated by sampling over this distribution as per Eqn. \ref{eqn:sampling1}. 
The model is then trained using these samples. 
Following an epoch of training, $\phi_j^0$ is updated with the new loss values corresponding to the selected samples, forming $\phi_j^1$ as per Eqn. \ref{eqn:sampling3} which in turn determines $\widehat{\alpha}_j^2$. 
The distribution is updated using Eqn. \ref{eqn:sampling2} and sampling re-occurs as per Eqn. \ref{eqn:sampling1}. 
This iterative process continues until the end of training.
Fig. \ref{fig:RCAP_overview} gives a graphical overview of the sequence of steps involved in each epoch. 
It is important to note that the softmax-based sampling distribution built from the loss values can become highly skewed due to the presence of a few large values, leading to unstable or biased subset selection. 
Hence, we define a clipping function in Eqn. \ref{eqn:clipping_function} with the clipping threshold as the maximum loss observed in epoch 0, before training begins. 
If a sample’s loss exceeds this baseline during training, we assume the model is making a deliberate or persistent error, possibly due to label noise or input corruption. By capping the per-sample loss before computing the sampling distribution, we reduce the likelihood of repeatedly selecting such samples, thereby keeping the pruning process fairly robust.

\begin{figure}[t]
    \centering
    \includegraphics[width=1.0\linewidth]{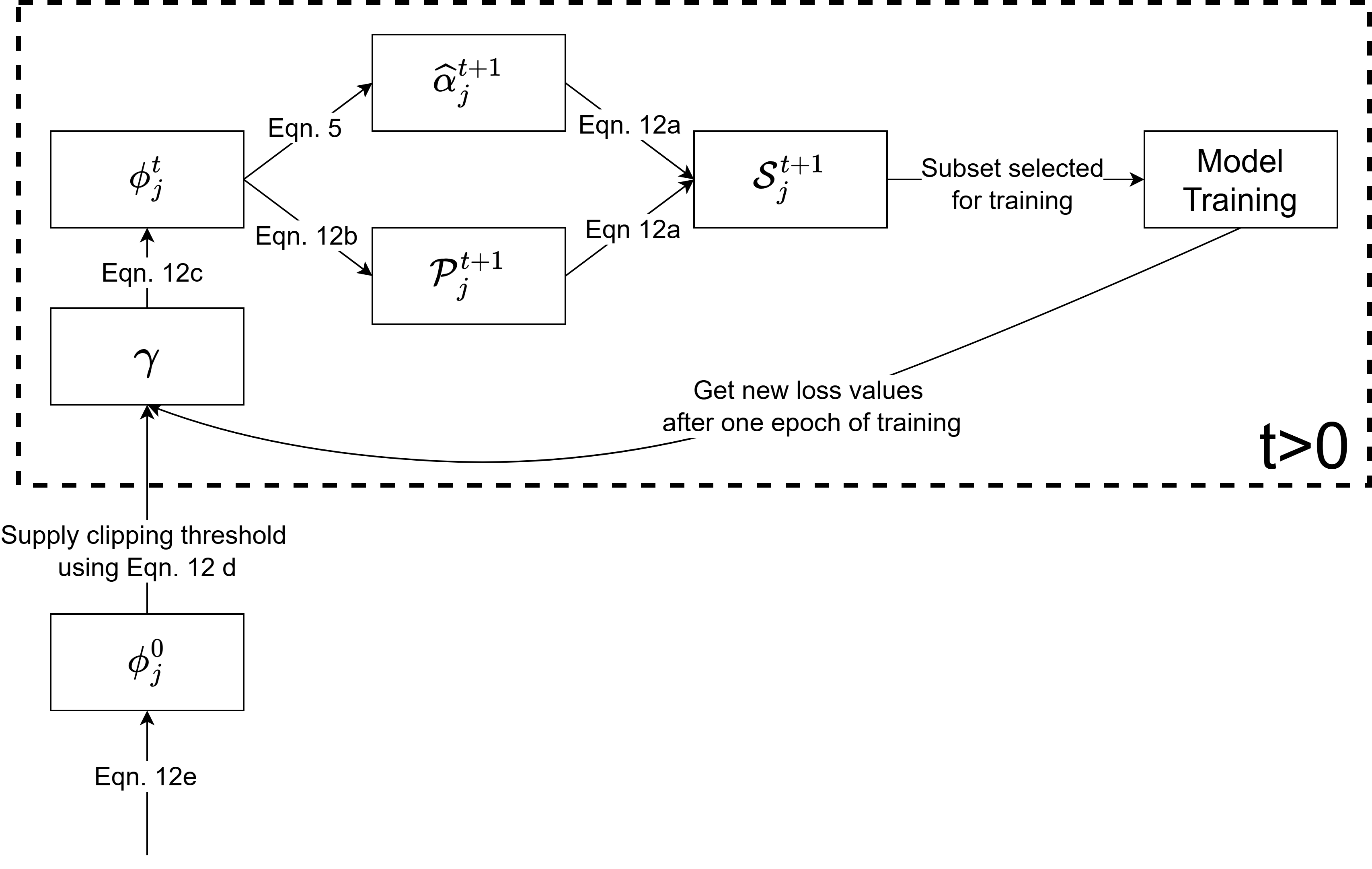}
    \caption{An overview of the sequence of steps involved in RCAP}
    \label{fig:RCAP_overview}
\end{figure}

Crucially, RCAP's per-class sample size determination and per-class sample selection modules incur no additional computational overhead, as they are determined entirely based on loss values that are computed during the forward pass. 
Such a strategy allows our proposed approach to achieve a per-sample time complexity of $O(1)$.
Algorithm \ref{alg:PDDP} provides the implementation details, where $I[j] = \{i ~|~ \forall ~y_i\in Y, y_i=j\}$ at $t=1$.

\newlength{\commentWidth}
\setlength{\commentWidth}{7cm}
\newcommand{\atcp}[1]{\tcp*[r]{\makebox[\commentWidth]{#1\hfill}}}
\begin{algorithm}
    \small
    \caption{The proposed RCAP Algorithm}
    \label{alg:PDDP}
    \SetKwInOut{Input}{Input}
    \SetKwInOut{Output}{Output}
    \Input{Dataset $\mathcal{S} = (\mathbf{X}, Y)$, Number of classes $c$, Pruning rate $r\in (0,1]$, Number of training epochs, $T$, Softmax temperature $\beta$ and Set of indices selected $I$}
    \Output{Trained Model.}
    $\alpha, m = [~], [~]$\\
    $n=$ length$(Y)$\\
    $\phi\leftarrow L(X_i) ~~\forall X_i\in\mathcal{S}$\\
    \For{$t = 1 \text{ to } T$} {
        \For{$j = 1 \text{ to } c$} {
            idx $\leftarrow\{i ~|~ \forall ~y_i\in Y, y_i=j\}$\\
            $m[j] = $ length$(\text{idx})$\\
            $\alpha[j]=\left\lfloor\frac{\sqrt{\frac{nc[j]}{n}\phi[I[j]]}}{\sum\sqrt{\frac{nc[j]}{n}\phi[I[j]]}}\times(1-r)\times\frac{n}{m[j]}\right\rfloor$\\
            Use Eqn. \ref{eqn:reweighting_alpha} to fix violating $\alpha$\\
            $\mathcal{P} = \frac{e^{\left(\phi[\text{idx}]/\beta\right)}}{\sum e^{\left(\phi[\text{idx}]/\beta\right)}}$\\
            I[j] $=\left\{i~|~\forall ~(X_i,y_i)\in \mathcal{X}\sim \mathcal{\mathcal{P}}\left(\mathbf{X}[idx], Y[idx]\right)\right\}$ and $|\mathcal{X}|=\left(\alpha[j]\times m[j]\right)$\\
        }
        Update model parameters using I\\
        Update $\phi$
    }
\end{algorithm}

\begin{table*}[t]
    \centering
    \smaller
    \caption{Worst Group Accuracy (Top-1) averaged over three separate runs. 
    The best scores are shown in bold while the second best are underlined.
    The time, in minutes, required by RCAP in comparison to full data training is also reported.}
    \label{tab:worst_group_results}
    \setlength\tabcolsep{4.2pt}
    \begin{tabular}{llllllllllr}
    \toprule
        Dataset & \shortstack[l]{Prune\\ Rate} & \multicolumn{1}{l}{CCS$\scriptstyle{(\%)}$} & \multicolumn{1}{l}{MetriQ$\scriptstyle{(\%)}$} & \multicolumn{1}{l}{TDDS$\scriptstyle{(\%)}$} & \multicolumn{1}{l}{UCB$\scriptstyle{(\%)}$} & \multicolumn{1}{l}{InfoBatch$\scriptstyle{(\%)}$} & \multicolumn{1}{l}{RS2 w/r$\scriptstyle{(\%)}$} & \multicolumn{1}{l}{RS2 w/o$\scriptstyle{(\%)}$} & \multicolumn{1}{l}{RCAP$\scriptstyle{(\%)}$} & \multicolumn{1}{c}{Time} \\
    \toprule
        \multirow{5}{*}{CIFAR10} & $00\%$ & $91.13 \scriptstyle{\pm 0.29}$ & $91.13\scriptstyle{\pm 0.29}$ & $91.13\scriptstyle{\pm 0.29}$ & $91.13\scriptstyle{\pm 0.29}$ & $91.13\scriptstyle{\pm 0.29}$ & $91.13\scriptstyle{\pm 0.29}$ & $91.13\scriptstyle{\pm 0.29}$ & $91.13\scriptstyle{\pm 0.29}$ & $23.3$ \\
        \cmidrule{2-11}
        & $50\%$ & $88.87\scriptstyle{\pm 0.29}$ & $\underline{90.53\scriptstyle{\pm 0.41}}$ & $90.27\scriptstyle{\pm 0.33}$ & $90.00\scriptstyle{\pm 0.45}$ & $89.97\scriptstyle{\pm 0.48}$ & $89.83\scriptstyle{\pm 0.24}$ & $90.10\scriptstyle{\pm 0.59}$ & $\mathbf{90.60\scriptstyle{\pm 0.14}}$ & $13.3$\\
        & $70\%$ & $83.50\scriptstyle{\pm 1.31}$ & $86.63\scriptstyle{\pm 0.29}$ & $84.57\scriptstyle{\pm 1.11}$ & $87.97\scriptstyle{\pm 0.45}$ & $88.50\scriptstyle{\pm 0.16}$ & $88.43\scriptstyle{\pm 0.49}$ & $\underline{88.60\scriptstyle{\pm 0.22}}$ & $\mathbf{89.73\scriptstyle{\pm 0.38}}$ & $6.7$\\
        & $80\%$ & $77.53\scriptstyle{\pm 0.87}$ & $82.50\scriptstyle{\pm 0.62}$ & $80.17\scriptstyle{\pm 1.11}$ & $84.53\scriptstyle{\pm 0.34}$ & $86.53\scriptstyle{\pm 0.97}$ & $87.43\scriptstyle{\pm 0.48}$ & $\underline{88.10\scriptstyle{\pm 0.50}}$ & $\mathbf{88.70}\scriptstyle{\pm 0.37}$ & $5.3$\\
        & $90\%$ & $67.20\scriptstyle{\pm 0.36}$ & $71.30\scriptstyle{\pm 1.08}$ & $68.63\scriptstyle{\pm 0.95}$ & $73.17\scriptstyle{\pm 0.38}$ & $\underline{83.53\scriptstyle{\pm 0.45}}$ & $79.63\scriptstyle{\pm 0.45}$ & $80.47\scriptstyle{\pm 0.33}$ & $\mathbf{85.07\scriptstyle{\pm 0.34}}$ & $3.3$\\
    \midrule
        \multirow{5}{*}{CIFAR100} & $00\%$ & $55.00\scriptstyle{\pm 1.41}$ & $55.00\scriptstyle{\pm 1.41}$ & $55.00\scriptstyle{\pm 1.41}$ & $55.00\scriptstyle{\pm 1.41}$ & $55.00\scriptstyle{\pm 1.41}$ & $55.00\scriptstyle{\pm 1.41}$ & $55.00\scriptstyle{\pm 1.41}$ & $55.00\scriptstyle{\pm 1.41}$ & $23.3$\\
        \cmidrule{2-11}
        & $50\%$ & $43.67\scriptstyle{\pm 0.47}$ & $\underline{54.00\scriptstyle{\pm 1.41}}$ & $43.67\scriptstyle{\pm 0.47}$ & $49.33\scriptstyle{\pm 0.47}$ & $52.33\scriptstyle{\pm 0.47}$ & $53.67\scriptstyle{\pm 1.25}$ & $\underline{54.00\scriptstyle{\pm 0.00}}$ & $\mathbf{55.00\scriptstyle{\pm 1.63}}$ & $13.3$\\
        & $70\%$ & $34.67\scriptstyle{\pm 0.47}$ & $43.67\scriptstyle{\pm 0.94}$ & $23.33\scriptstyle{\pm 1.70}$ & $42.57\scriptstyle{\pm 1.89}$ & $51.00\scriptstyle{\pm 1.63}$ & $50.33\scriptstyle{\pm 1.70}$ & $\underline{52.00\scriptstyle{\pm 0.82}}$ & $\mathbf{52.67\scriptstyle{\pm 0.94}}$ & $6.7$\\
        & $80\%$ & $21.00\scriptstyle{\pm 0.82}$ & $30.33\scriptstyle{\pm 0.47}$ & $15.67\scriptstyle{\pm 0.47}$ & $33.33\scriptstyle{\pm 1.89}$ & $\underline{50.33\scriptstyle{\pm 0.47}}$ & $\underline{50.33\scriptstyle{\pm 1.70}}$ & $49.00\scriptstyle{\pm 0.00}$ & $\mathbf{50.67\scriptstyle{\pm 0.47}}$ & $5.3$\\
        & $90\%$ & $07.00\scriptstyle{\pm 0.82}$ & $11.00\scriptstyle{\pm 1.63}$ & $05.33\scriptstyle{\pm 0.47}$ & $14.33\scriptstyle{\pm 1.25}$ & $\underline{46.67\scriptstyle{\pm 0.47}}$ & $35.67\scriptstyle{\pm 0.94}$ & $35.33\scriptstyle{\pm 0.94}$ & $\mathbf{48.33\scriptstyle{\pm 1.89}}$ & $3.3$\\
    \midrule
        \multirow{5}{*}{ImageNet} & $00\%$ & $20.67\scriptstyle{\pm 2.49}$ & $20.67\scriptstyle{\pm 2.49}$ & $20.67\scriptstyle{\pm 2.49}$ & $20.67\scriptstyle{\pm 2.49}$ & $20.67\scriptstyle{\pm 2.49}$ & $20.67\scriptstyle{\pm 2.49}$ & $20.67\scriptstyle{\pm 2.49}$ & $20.67\scriptstyle{\pm 2.49}$ & $249.0$\\
        \cmidrule{2-11}
        & $50\%$ & $00.00\scriptstyle{\pm 0.00}$ & $00.00\scriptstyle{\pm 0.00}$ & $00.00\scriptstyle{\pm 0.00}$ & $00.00\scriptstyle{\pm 0.00}$ & $18.67\scriptstyle{\pm 2.49}$ & $\underline{22.67\scriptstyle{\pm 0.94}}$ & $19.33\scriptstyle{\pm 0.94}$ & $\mathbf{24.00\scriptstyle{\pm 0.00}}$ & $130.5$\\
        & $70\%$ & $00.00\scriptstyle{\pm 0.00}$ & $00.00\scriptstyle{\pm 0.00}$ & $00.00\scriptstyle{\pm 0.00}$ & $00.00\scriptstyle{\pm 0.00}$ & $18.00\scriptstyle{\pm 2.83}$ & $\underline{20.00\scriptstyle{\pm 1.64}}$ & $\underline{20.00\scriptstyle{\pm 4.32}}$ & $\mathbf{24.00\scriptstyle{\pm 2.83}}$ & $82.7$\\
        & $80\%$ & $00.00\scriptstyle{\pm 0.00}$ & $00.00\scriptstyle{\pm 0.00}$ & $00.00\scriptstyle{\pm 0.00}$ & $00.00\scriptstyle{\pm 0.00}$ & $14.00\scriptstyle{\pm 0.00}$ & $20.67\scriptstyle{\pm 0.94}$ & $\underline{23.33\scriptstyle{\pm 0.94}}$ & $\mathbf{24.00\scriptstyle{\pm 1.63}}$ & $58.0$\\
        & $90\%$ & $00.00\scriptstyle{\pm 0.00}$ & $00.00\scriptstyle{\pm 0.00}$ & $00.00\scriptstyle{\pm 0.00}$ & $00.00\scriptstyle{\pm 0.00}$ & $20.00\scriptstyle{\pm 2.83}$ & $19.33\scriptstyle{\pm 1.89}$ & $\underline{23.33\scriptstyle{\pm 2.49}}$ & $\mathbf{26.00\scriptstyle{\pm 0.00}}$ & $30.5$\\
    \midrule
        \multirow{5}{*}{Waterbirds} & $00\%$ & $90.27\scriptstyle{\pm 0.67}$ & $90.27\scriptstyle{\pm 0.67}$ & $90.27\scriptstyle{\pm 0.67}$ & $90.27\scriptstyle{\pm 0.67}$ & $90.27\scriptstyle{\pm 0.67}$ & $90.27\scriptstyle{\pm 0.67}$ & $90.27\scriptstyle{\pm 0.67}$ & $90.27\scriptstyle{\pm 0.67}$ & $70.0$\\
        \cmidrule{2-11}
        & $50\%$ & $89.97\scriptstyle{\pm 0.47}$ & $89.22\scriptstyle{\pm 0.27}$ & $90.10\scriptstyle{\pm 0.35}$ & $50.00\scriptstyle{\pm 0.00}$ & $\underline{90.48\scriptstyle{\pm 0.71}}$ & $\underline{90.48\scriptstyle{\pm 0.17}}$ & $89.72\scriptstyle{\pm 1.42}$ & $\mathbf{91.34\scriptstyle{\pm 0.01}}$ & $35.0$\\
        & $70\%$ & $91.02\scriptstyle{\pm 0.06}$ & $82.35\scriptstyle{\pm 0.87}$ & $90.11\scriptstyle{\pm 0.18}$ & $50.00\scriptstyle{\pm 0.00}$ & $\underline{90.60\scriptstyle{\pm 0.61}}$ & $90.10\scriptstyle{\pm 0.18}$ & $90.35\scriptstyle{\pm 0.35}$ & $\mathbf{92.09\scriptstyle{\pm 0.09}}$ & $20.0$\\
        & $80\%$ & $90.40\scriptstyle{\pm 0.21}$ & $81.73\scriptstyle{\pm 0.26}$ & $\underline{90.71\scriptstyle{\pm 0.52}}$ & $50.00\scriptstyle{\pm 0.00}$ & $89.83\scriptstyle{\pm 0.25}$ & $89.61\scriptstyle{\pm 1.70}$ & $89.60\scriptstyle{\pm 0.94}$ & $\mathbf{91.60\scriptstyle{\pm 0.18}}$ & $15.0$\\
        & $90\%$ & $90.27\scriptstyle{\pm 0.04}$ & $79.05\scriptstyle{\pm 0.37}$ & $\underline{90.48\scriptstyle{\pm 0.18}}$ & $50.00\scriptstyle{\pm 0.00}$ & $89.06\scriptstyle{\pm 0.57}$ & $89.78\scriptstyle{\pm 0.35}$ & $88.97\scriptstyle{\pm 0.47}$ & $\mathbf{91.21\scriptstyle{\pm 0.38}}$ & $10.0$\\
    \midrule
        \multirow{5}{*}{CelebA} & $00\%$ & $90.14\scriptstyle{\pm 0.35}$ & $90.14\scriptstyle{\pm 0.35}$ & $90.14\scriptstyle{\pm 0.35}$ & $90.14\scriptstyle{\pm 0.35}$ & $90.14\scriptstyle{\pm 0.35}$ & $90.14\scriptstyle{\pm 0.35}$ & $90.14\scriptstyle{\pm 0.35}$ & $90.14\scriptstyle{\pm 0.35}$ & $21.7$\\
        \cmidrule{2-11}
        & $50\%$ & $86.43\scriptstyle{\pm 0.80}$ & \underline{$91.72\scriptstyle{\pm 1.40}$} & $87.44\scriptstyle{\pm 1.80}$ & $50.00\scriptstyle{\pm 0.00}$ & $86.47\scriptstyle{\pm 2.62}$ & $88.99\scriptstyle{\pm 0.91}$ & $87.97\scriptstyle{\pm 2.46}$ & $\mathbf{92.30\scriptstyle{\pm 0.16}}$ & $12.1$\\
        & $70\%$ & $86.28\scriptstyle{\pm 1.27}$ & \underline{$91.45\scriptstyle{\pm 0.30}$} & $88.29\scriptstyle{\pm 2.12}$ & $50.00\scriptstyle{\pm 0.00}$ & $84.46\scriptstyle{\pm 5.06}$ & $85.76\scriptstyle{\pm 2.21}$ & $88.27\scriptstyle{\pm 2.01}$ & $\mathbf{92.19\scriptstyle{\pm 0.22}}$ & $5.8$\\
        & $80\%$ & $89.58\scriptstyle{\pm 0.31}$ & \underline{$90.70\scriptstyle{\pm 0.26}$} & $86.16\scriptstyle{\pm 0.59}$ & $50.00\scriptstyle{\pm 0.00}$ & $82.17\scriptstyle{\pm 0.34}$ & $88.21\scriptstyle{\pm 1.58}$ & $88.96\scriptstyle{\pm 1.54}$ & $\mathbf{91.64\scriptstyle{\pm 0.57}}$ & $4.0$\\
        & $90\%$ & $84.75\scriptstyle{\pm 1.99}$ & \underline{$89.39\scriptstyle{\pm 1.48}$} & $80.57\scriptstyle{\pm 3.02}$ & $50.00\scriptstyle{\pm 0.00}$ & $79.35\scriptstyle{\pm 0.26}$ & $81.49\scriptstyle{\pm 1.72}$ & $84.25\scriptstyle{\pm 2.05}$ & $\mathbf{91.24\scriptstyle{\pm 0.41}}$ & $2.0$\\
    \midrule
        \multirow{5}{*}{iNaturalist} & $00\%$ & $69.66\scriptstyle{\pm1.17}$ & $69.66\scriptstyle{\pm1.17}$ & $69.66\scriptstyle{\pm1.17}$ & $69.66\scriptstyle{\pm1.17}$ & $69.66\scriptstyle{\pm1.17}$ & $69.66\scriptstyle{\pm1.17}$ & $69.66\scriptstyle{\pm1.17}$ & $69.66\scriptstyle{\pm1.17}$ & $58.5$\\
        \cmidrule{2-11}
        & $50\%$ & $65.62\scriptstyle{\pm0.13}$ & \underline{$65.97\scriptstyle{\pm0.48}$} & $49.32\scriptstyle{\pm1.79}$ & $0.00\scriptstyle{\pm0.00}$ & $62.76\scriptstyle{\pm0.26}$ & $61.73\scriptstyle{\pm1.46}$ & $63.01\scriptstyle{\pm1.37}$ & $\mathbf{66.44\scriptstyle{\pm2.06}}$ & $34.1$\\
        & $70\%$ & $61.12\scriptstyle{\pm0.12}$ & \underline{$65.94\scriptstyle{\pm0.82}$} & $48.61\scriptstyle{\pm1.39}$ & $00.00\scriptstyle{\pm0.00}$ & $40.42\scriptstyle{\pm8.91}$ & $51.37\scriptstyle{\pm2.05}$ & $54.79\scriptstyle{\pm0.00}$ & $\mathbf{69.18\scriptstyle{\pm2.06}}$ & $19.6$\\
        & $80\%$ & $61.53\scriptstyle{\pm2.09}$ & \underline{$65.70\scriptstyle{\pm0.70}$} & $26.05\scriptstyle{\pm1.74}$ & $00.00\scriptstyle{\pm0.00}$ & $36.31\scriptstyle{\pm6.17}$ & $40.42\scriptstyle{\pm4.8}$ & $37.68\scriptstyle{\pm0.69}$ & $\mathbf{69.18\scriptstyle{\pm0.69}}$ & $10.2$\\
        & $90\%$ & $56.64\scriptstyle{\pm0.89}$ & \underline{$65.14\scriptstyle{\pm4.31}$} & $00.00\scriptstyle{\pm0.00}$ & $00.00\scriptstyle{\pm0.00}$ & $05.48\scriptstyle{\pm4.11}$ & $03.41\scriptstyle{\pm0.71}$ & $00.69\scriptstyle{\pm0.69}$ & $\mathbf{68.49\scriptstyle{\pm2.74}}$ & $6.7$\\
    \bottomrule
    \end{tabular}
\end{table*}

\begin{table*}[t]
    \centering
    \smaller
    \caption{Average Group Accuracy (Top-1) averaged over three separate runs. 
    The best scores are shown in bold while the second best are underlined.
    The time, in minutes, required by RCAP in comparison to full data training is also reported.}
    \label{tab:average_group_results}
    \setlength\tabcolsep{3.8pt}
    \begin{tabular}{llllllllllr}
    \toprule
        Dataset & \shortstack[l]{Prune\\ Rate} & \multicolumn{1}{c}{CCS$\scriptstyle{(\%)}$} & \multicolumn{1}{c}{MetriQ$\scriptstyle{(\%)}$} & \multicolumn{1}{c}{TDDS$\scriptstyle{(\%)}$} & \multicolumn{1}{c}{UCB$\scriptstyle{(\%)}$} & \multicolumn{1}{c}{InfoBatch$\scriptstyle{(\%)}$} & \multicolumn{1}{c}{RS2 w/r$\scriptstyle{(\%)}$} & \multicolumn{1}{c}{RS2 w/o$\scriptstyle{(\%)}$} & \multicolumn{1}{c}{RCAP$\scriptstyle{(\%)}$} & \multicolumn{1}{c}{Time} \\
    \toprule
        \multirow{5}{*}{CIFAR10} & $00\%$ & $95.41\scriptstyle{\pm 0.10}$ & $95.41\scriptstyle{\pm 0.10}$ & $95.41\scriptstyle{\pm 0.10}$ & $95.41\scriptstyle{\pm 0.10}$ & $95.41\scriptstyle{\pm 0.10}$ & $95.41\scriptstyle{\pm 0.10}$ & $95.41\scriptstyle{\pm 0.10}$ & $95.41\scriptstyle{\pm 0.10}$ & 23.3\\\cmidrule{2-11}
        & $50\%$ & $93.85\scriptstyle{\pm 0.16}$ & $92.81\scriptstyle{\pm 0.19}$ & $\mathbf{94.88\scriptstyle{\pm 0.06}}$ & $94.78\scriptstyle{\pm 0.04}$ & $94.84\scriptstyle{\pm 0.04}$ & $\underline{94.85\scriptstyle{\pm 0.14}}$ & $94.64\scriptstyle{\pm 0.17}$ & $94.81\scriptstyle{\pm 0.24}$ & 13.3\\
        & $70\%$ & $90.10\scriptstyle{\pm 0.59}$ & $89.89\scriptstyle{\pm 0.40}$ & $91.79\scriptstyle{\pm 0.61}$ & $93.80\scriptstyle{\pm 0.17}$ & $94.14\scriptstyle{\pm 0.21}$ & $\underline{94.19\scriptstyle{\pm 0.16}}$ & $94.09\scriptstyle{\pm 0.39}$ & $\mathbf{94.40\scriptstyle{\pm 0.14}}$ & 6.7\\
        & $80\%$ & $86.50\scriptstyle{\pm 0.38}$ & $86.31\scriptstyle{\pm 0.54}$ & $89.59\scriptstyle{\pm 0.72}$ & $91.24\scriptstyle{\pm 0.28}$ & $\underline{93.44\scriptstyle{\pm 0.36}}$ & $93.22\scriptstyle{\pm 0.29}$ & $\mathbf{93.50\scriptstyle{\pm 0.32}}$ & $93.04\scriptstyle{\pm 0.16}$ & 5.3\\
        & $90\%$ & $81.20\scriptstyle{\pm 0.17}$ & $76.26\scriptstyle{\pm 1.00}$ & $83.60\scriptstyle{\pm 1.03}$ & $84.29\scriptstyle{\pm 0.83}$ & $\mathbf{91.83\scriptstyle{\pm 0.85}}$ & $88.82\scriptstyle{\pm 0.24}$ & $89.11\scriptstyle{\pm 0.81}$ & $\underline{91.45\scriptstyle{\pm 0.27}}$ & 3.3\\
    \midrule
        \multirow{5}{*}{CIFAR100} & $00\%$ & $77.85\scriptstyle{\pm 0.56}$ & $77.85\scriptstyle{\pm 0.56}$ & $77.85\scriptstyle{\pm 0.56}$ & $77.85\scriptstyle{\pm 0.56}$ & $77.85\scriptstyle{\pm 0.56}$ & $77.85\scriptstyle{\pm 0.56}$ & $77.85\scriptstyle{\pm 0.56}$ & $77.85\scriptstyle{\pm 0.56}$ & 23.3\\\cmidrule{2-11}
        & $50\%$ & $69.16\scriptstyle{\pm 0.51}$ & $69.50\scriptstyle{\pm 0.44}$ & $71.87\scriptstyle{\pm 0.54}$ & $74.95\scriptstyle{\pm 0.32}$ & $76.03\scriptstyle{\pm 0.47}$ & $76.35\scriptstyle{\pm 0.36}$ & $\underline{76.76\scriptstyle{\pm 0.38}}$& $\mathbf{76.90\scriptstyle{\pm 0.30}}$ & 13.3\\
        & $70\%$ & $64.85\scriptstyle{\pm 0.20}$ & $60.98\scriptstyle{\pm 0.36}$ & $64.94\scriptstyle{\pm 0.45}$ & $69.95\scriptstyle{\pm 0.98}$ & $75.53\scriptstyle{\pm 0.30}$ & $74.86\scriptstyle{\pm 0.59}$ & $\mathbf{75.76\scriptstyle{\pm 0.23}}$& $\underline{75.63\scriptstyle{\pm 0.13}}$ & 6.7\\
        & $80\%$ & $53.15\scriptstyle{\pm 1.30}$ & $51.03\scriptstyle{\pm 1.35}$ & $57.13\scriptstyle{\pm 1.33}$ & $62.13\scriptstyle{\pm 0.48}$ & $\mathbf{74.68\scriptstyle{\pm 0.11}}$ & $73.77\scriptstyle{\pm 0.71}$ & $73.68\scriptstyle{\pm 0.40}$ & $\underline{74.62\scriptstyle{\pm 0.13}}$ & 5.3\\
        & $90\%$ & $35.42\scriptstyle{\pm 0.72}$ & $29.99\scriptstyle{\pm 1.04}$ & $40.98\scriptstyle{\pm 1.69}$ & $40.03\scriptstyle{\pm 1.65}$ & $\mathbf{71.93\scriptstyle{\pm 0.38}}$ & $66.90\scriptstyle{\pm 0.29}$ & $66.59\scriptstyle{\pm 0.76}$ & $\underline{70.92\scriptstyle{\pm 0.22}}$ & 3.3\\
    \midrule
        \multirow{5}{*}{ImageNet} & $00\%$ & $84.47\scriptstyle{\pm 0.05}$ & $84.47\scriptstyle{\pm 0.05}$ & $84.47\scriptstyle{\pm 0.05}$ & $84.47\scriptstyle{\pm 0.05}$ & $84.47\scriptstyle{\pm 0.05}$ & $84.47\scriptstyle{\pm 0.05}$ & $84.47\scriptstyle{\pm 0.05}$ & $84.47\scriptstyle{\pm 0.05}$ & 249\\\cmidrule{2-11}
        & $50\%$ & $74.78\scriptstyle{\pm 0.19}$ & $71.78\scriptstyle{\pm 0.16}$ & $78.78\scriptstyle{\pm 0.27}$ & $80.23\scriptstyle{\pm 0.39}$ & $\mathbf{84.40\scriptstyle{\pm 0.03}}$ & $84.17\scriptstyle{\pm 0.09}$ & $84.31\scriptstyle{\pm 0.07}$ & $\underline{84.37\scriptstyle{\pm 0.04}}$ & 130.5\\
        & $70\%$ & $73.95\scriptstyle{\pm 0.21}$ & $70.95\scriptstyle{\pm 0.39}$ & $75.82\scriptstyle{\pm 0.72}$ & $77.29\scriptstyle{\pm 0.15}$ & $\underline{84.09\scriptstyle{\pm 0.18}}$ & $83.88\scriptstyle{\pm 0.11}$ & $\mathbf{84.14\scriptstyle{\pm 0.06}}$ & $83.89\scriptstyle{\pm 0.13}$ & 82.7\\
        & $80\%$ & $69.25\scriptstyle{\pm 0.19}$ & $70.98\scriptstyle{\pm 0.21}$ & $71.09\scriptstyle{\pm 0.44}$ & $72.43\scriptstyle{\pm 0.59}$ & $\underline{83.78\scriptstyle{\pm 0.05}}$ & $83.77\scriptstyle{\pm 0.03}$ & $\mathbf{83.94\scriptstyle{\pm 0.01}}$ & $83.46\scriptstyle{\pm 0.06}$ & 58.0\\
        & $90\%$ & $62.23\scriptstyle{\pm 0.41}$ & $70.16\scriptstyle{\pm 0.49}$ & $69.13\scriptstyle{\pm 0.63}$ & $71.24\scriptstyle{\pm 0.96}$ & $83.46\scriptstyle{\pm 0.06}$ & $83.19\scriptstyle{\pm 0.02}$ & $\underline{83.49\scriptstyle{\pm 0.03}}$ & $\mathbf{83.54\scriptstyle{\pm 0.02}}$ & 30.5\\
    \midrule
        \multirow{5}{*}{Waterbirds} & $00\%$ & $90.87\scriptstyle{\pm0.30}$ & $90.87\scriptstyle{\pm0.30}$ & $90.87\scriptstyle{\pm0.30}$ & $90.87\scriptstyle{\pm0.30}$ & $90.87\scriptstyle{\pm0.30}$ & $90.87\scriptstyle{\pm0.30}$ & $90.87\scriptstyle{\pm0.30}$ & $90.87\scriptstyle{\pm0.30}$ & 70.0\\\cmidrule{2-11}
        & $50\%$ & $90.84\scriptstyle{\pm 0.40}$ & $89.42\scriptstyle{\pm 0.24}$ & $90.71\scriptstyle{\pm 0.21}$ & $50.00\scriptstyle{\pm 0.00}$ & $\underline{91.65\scriptstyle{\pm 0.61}}$ & $91.15\scriptstyle{\pm 0.74}$ & $90.54\scriptstyle{\pm 1.08}$ & $\mathbf{91.78\scriptstyle{\pm 0.42}}$ & 35.0\\
        & $70\%$ & $91.40\scriptstyle{\pm 0.06}$ & $84.41\scriptstyle{\pm 1.62}$ & $90.98\scriptstyle{\pm 0.58}$ & $50.00\scriptstyle{\pm 0.00}$ & $\underline{91.42\scriptstyle{\pm 0.64}}$ & $90.30\scriptstyle{\pm 0.16}$ & $90.82\scriptstyle{\pm 0.36}$ & $\mathbf{92.26\scriptstyle{\pm 0.56}}$ & 20.0\\
        & $80\%$ & $90.61\scriptstyle{\pm 0.14}$ & $83.49\scriptstyle{\pm 0.96}$ & $\underline{90.93\scriptstyle{\pm 0.59}}$ & $50.00\scriptstyle{\pm 0.00}$ & $90.09\scriptstyle{\pm 0.18}$ & $90.01\scriptstyle{\pm 1.30}$ & $90.53\scriptstyle{\pm 0.29}$ & $\mathbf{91.98\scriptstyle{\pm 0.40}}$ & 15.0\\
        & $90\%$ & $90.33\scriptstyle{\pm 0.34}$ & $81.84\scriptstyle{\pm 0.34}$ & $\underline{90.77\scriptstyle{\pm 0.08}}$ & $50.00\scriptstyle{\pm 0.00}$ & $89.42\scriptstyle{\pm 0.72}$ & $89.99\scriptstyle{\pm 0.46}$ & $89.36\scriptstyle{\pm 0.35}$ & $\mathbf{91.47\scriptstyle{\pm 0.33}}$ & 10.0\\
    \midrule
        \multirow{5}{*}{CelebA} & $00\%$ & $92.04\scriptstyle{\pm 0.35}$ & $92.04\scriptstyle{\pm 0.35}$ & $92.04\scriptstyle{\pm 0.35}$ & $92.04\scriptstyle{\pm 0.35}$ & $92.04\scriptstyle{\pm 0.35}$ & $92.04\scriptstyle{\pm 0.35}$ & $92.04\scriptstyle{\pm 0.35}$ & $92.04\scriptstyle{\pm 0.35}$ & 21.7\\\cmidrule{2-11}
        & $50\%$ & $91.15\scriptstyle{\pm 0.09}$ & $\mathbf{93.14\scriptstyle{\pm 0.36}}$ & $91.45\scriptstyle{\pm 0.57}$ & $50.00\scriptstyle{\pm 0.00}$ & $91.28\scriptstyle{\pm 0.59}$ & $91.94\scriptstyle{\pm 0.26}$ & $90.87\scriptstyle{\pm 0.35}$ & \underline{$92.84\scriptstyle{\pm 0.36}$} & 12.1\\
        & $70\%$ & $91.28\scriptstyle{\pm 0.47}$ & \underline{$92.42\scriptstyle{\pm 0.05}$} & $91.10\scriptstyle{\pm 0.27}$ & $50.00\scriptstyle{\pm 0.00}$ & $90.19\scriptstyle{\pm 1.75}$ & $90.93\scriptstyle{\pm 0.67}$ & $91.03\scriptstyle{\pm 1.05}$ & $\mathbf{93.00\scriptstyle{\pm 0.07}}$ & 5.8\\
        & $80\%$ & $91.48\scriptstyle{\pm 0.09}$ & \underline{$91.68\scriptstyle{\pm 0.19}$} & $90.53\scriptstyle{\pm 0.38}$ & $50.00\scriptstyle{\pm 0.00}$ & $89.44\scriptstyle{\pm 0.09}$ & $90.20\scriptstyle{\pm 0.79}$ & $90.19\scriptstyle{\pm 0.54}$ & $\mathbf{92.39\scriptstyle{\pm 0.26}}$ & 4.0\\
        & $90\%$ & $87.08\scriptstyle{\pm 0.44}$ & \underline{$91.14\scriptstyle{\pm 0.62}$} & $87.46\scriptstyle{\pm 0.64}$ & $50.00\scriptstyle{\pm 0.00}$ & $87.79\scriptstyle{\pm 0.19}$ & $88.50\scriptstyle{\pm 0.48}$ & $89.38\scriptstyle{\pm 0.55}$ & $\mathbf{91.47\scriptstyle{\pm 0.31}}$ & 2.0\\
    \midrule
        \multirow{5}{*}{iNaturalist} & $00\%$ & $83.62\scriptstyle{\pm0.06}$ & $83.62\scriptstyle{\pm0.06}$ & $83.62\scriptstyle{\pm0.06}$ & $83.62\scriptstyle{\pm0.06}$ & $83.62\scriptstyle{\pm0.06}$ & $83.62\scriptstyle{\pm0.06}$ & $83.62\scriptstyle{\pm0.06}$ & $83.62\scriptstyle{\pm0.06}$ & $58.5$\\\cmidrule{2-11}
        & $50\%$ & $84.03\scriptstyle{\pm0.06}$ & \underline{$84.19\scriptstyle{\pm0.16}$} & $80.32\scriptstyle{\pm0.26}$ & $07.69\scriptstyle{\pm0.00}$ & $81.65\scriptstyle{\pm0.05}$ & $81.45\scriptstyle{\pm0.18}$ & $81.87\scriptstyle{\pm0.04}$ & $\mathbf{84.26\scriptstyle{\pm0.09}}$ & $34.1$\\
        & $70\%$ & $81.86\scriptstyle{\pm1.14}$ & \underline{$82.50\scriptstyle{\pm0.17}$} & $76.12\scriptstyle{\pm0.48}$ & $07.69\scriptstyle{\pm0.00}$ & $75.40\scriptstyle{\pm0.74}$ & $78.82\scriptstyle{\pm0.33}$ & $79.85\scriptstyle{\pm0.15}$ & $\mathbf{84.12\scriptstyle{\pm0.36}}$ & $19.6$\\
        & $80\%$ & $80.25\scriptstyle{\pm0.38}$ & \underline{$82.81\scriptstyle{\pm0.36}$} & $70.20\scriptstyle{\pm0.30}$ & $07.69\scriptstyle{\pm0.00}$ & $76.73\scriptstyle{\pm0.03}$ & $76.31\scriptstyle{\pm0.16}$ & $76.43\scriptstyle{\pm0.86}$ & $\mathbf{83.93\scriptstyle{\pm0.24}}$ & $10.2$\\
        & $90\%$ & \underline{$79.74\scriptstyle{\pm0.16}$} & $78.32\scriptstyle{\pm4.50}$ & $63.19\scriptstyle{\pm0.14}$ & $07.69\scriptstyle{\pm0.00}$ & $70.67\scriptstyle{\pm0.85}$ & $70.40\scriptstyle{\pm0.32}$ & $70.34\scriptstyle{\pm0.76}$ & $\mathbf{82.97\scriptstyle{\pm0.57}}$ & $6.7$\\
    \bottomrule
    \end{tabular}
\end{table*}

\section{Experiments}
\subsection{Baselines}
We evaluate our proposed approach against seven representative baselines: two static and four dynamic data pruning methods as well as a coreset selection technique.
\textbf{CCS} \citep{DBLP:conf/iclr/ZhengLL023} is a state-of-the-art coreset selection technique that maximizes data distribution coverage. 
\textbf{TDDS} \citep{DBLP:conf/cvpr/ZhangDLXZ24} is the current leading static data pruning method. 
It incorporates training dynamics to determine sample importance. 
\textbf{MetriQ} \citep{DBLP:journals/corr/abs-2404-05579} is a class-ratio-aware static data pruning method designed to reduce classification bias.
\textbf{UCB} \cite{raju2021accelerating} is one of the earliest dynamic data pruning approaches utilizing sample uncertainty and Reinforcement Learning inspired exploration to prune unimportant samples.
\textbf{InfoBatch} \citep{qin2024infobatch} is a state-of-the-art dynamic data pruning method that selects samples based on their loss and adaptively determines the pruning ratio via a hyperparameter.
\textbf{RS2} \citep{DBLP:conf/iclr/OkanovicWMNKKGR24} is another state-of-the-art dynamic data pruning approach that performs pruning by random selection with and without replacement.
\textbf{Note:} For brevity's sake, we omit comparisons with older methods (e.g., GraNd, CRAIG, GradMatch, Glister, and CREST) as all considered baselines have demonstrated superior performance in prior studies.

\subsection{Dataset and Model Details}
We benchmark RCAP on six diverse datasets in terms of scale and class imbalance using five distinct networks.
\textbf{CIFAR10} \citep{krizhevsky2009learning} is a medium-scale, class-balanced dataset comprising $10$ classes, each containing $5000$ samples over which we trained the ResNet18 model \citep{DBLP:conf/cvpr/HeZRS16} from scratch.
\textbf{CIFAR100} \citep{krizhevsky2009learning} is a medium-scale, class-balanced dataset comprising $100$ classes, each containing $500$ samples, over which we trained the ResNet18  model from scratch as well.
\textbf{ImageNet} \citep{5206848} is a large-scale, relatively class-balanced dataset of over $1.2$ million images comprising $1000$ classes, each containing approximately $1300$ samples with slight variations. 
We trained a two layer MLP on top of the Dinov2-b model \citep{DBLP:conf/cvpr/RadosavovicKGHD20} on this dataset.
\textbf{Waterbirds} \citep{DBLP:journals/corr/abs-1911-08731} is a moderately class-imbalanced, small scale dataset with 4795 images split into land birds and water birds $(76.8\% vs. 23.2\%)$. 
We fine-tuned the EfficientNet-b3 model \citep{pmlr-v97-tan19a} pre-trained on ImageNet.
\textbf{CelebA} \citep{DBLP:conf/iccv/LiuLWT15} is a relatively high class-imbalanced, medium-scale dataset containing over $160K$ images. 
We chose the blonde $85.1\%$ vs not blonde $14.9\%$, binary classification task.
We train an EfficientFormerV2 \citep{li2023rethinking} from scratch, for this dataset. 
\textbf{iNaturalist} \citep{Horn_2018_CVPR} is a large-scale, extremely imbalanced dataset with over $600K$ images across $13$ superclasses. 
The largest group contains $196,613$ images, while the smallest has $381$. 
We fine-tune an ImageNet pre-trained ResNet50 \citep{DBLP:conf/cvpr/HeZRS16}.

\subsection{Training Details}
To ensure fair evaluation, we re-implement all baselines and verify that the Top-1 average accuracy of each method matches its corresponding reported value. 
For robustness analysis, we report the worst group accuracy and corresponding average group accuracy (Top-1). 
In doing so, the average group accuracy of each method as reported in their corresponding manuscripts changes considerably, especially at high pruning rates. 
All static methods utilize the same network for subset selection and training which is the best-case scenario for such techniques.
For a fair comparison between InfoBatch and other baselines, we adjust the number of training iterations as recommended by \cite{qin2024infobatch}.
To understand RCAP's training efficiency, we report its training time across all pruning rates as well as the total time for full dataset training, in minutes. 
Further training specifics are detailed in Section \ref{sec:supp-training-details} of the Supplementary Material.

\subsection{Results}
Table \ref{tab:worst_group_results} presents the Top-1 worst-group accuracy across six datasets at four pruning rates. 
RCAP consistently outperforms all baselines in every experiment. 
Notably, on ImageNet, Waterbirds and CelebA, it surpasses full-data training, even at a $90\%$ pruning rate.
The improvement stems from the class imbalance in these datasets, ranging from mild to high. 
By pruning aggressively, RCAP naturally retains fewer samples from the majority class while preserving most or all minority-class samples. 
This results in a more balanced classification task, which enhances worst-group accuracy. 
Other pruning methods like MetriQ and CCS also benefit from the same effect. 
We find that on large-scale imbalanced datasets with few classes, particularly CelebA and iNaturalist, static pruning methods such as CCS and MetriQ perform significantly better than dynamic pruning techniques.
However, such methods fail entirely, in terms of worst group accuracy, on ImageNet where the number of classes is large. 
Note that all static pruning techniques are evaluated in their best-case scenario, i.e., the architecture used for data pruning and consequent training on the retained data are the same. 
We find that UCB, which was originally only tested on CIFAR10 and CIFAR100, performs the worst among all baselines with the resultant models producing random output for four out of six datasets.
Table \ref{tab:average_group_results} reports Top-1 average-group accuracy. 
RCAP performs comparably to existing methods on class-balanced datasets. 
On class-imbalanced datasets, it outperforms all baselines and even full-data training, primarily due to its gains in worst-group accuracy.

Beyond accuracy, RCAP is highly efficient. 
It delivers up to $8.69\times$ speed-up in comparison to full-data training with less than $1\%$ drop in performance on ImageNet, Waterbirds, CelebA and iNaturalist datasets, on average, as demonstrated in Table \ref{tab:worst_group_results} (or Table \ref{tab:average_group_results}).
This combination of efficiency and robustness establishes RCAP as the new state-of-the-art in robust dynamic dataset pruning.

\begin{figure*}[t]
    \centering
    \subfloat[CIFAR10]{\includegraphics[width=0.25\textwidth]{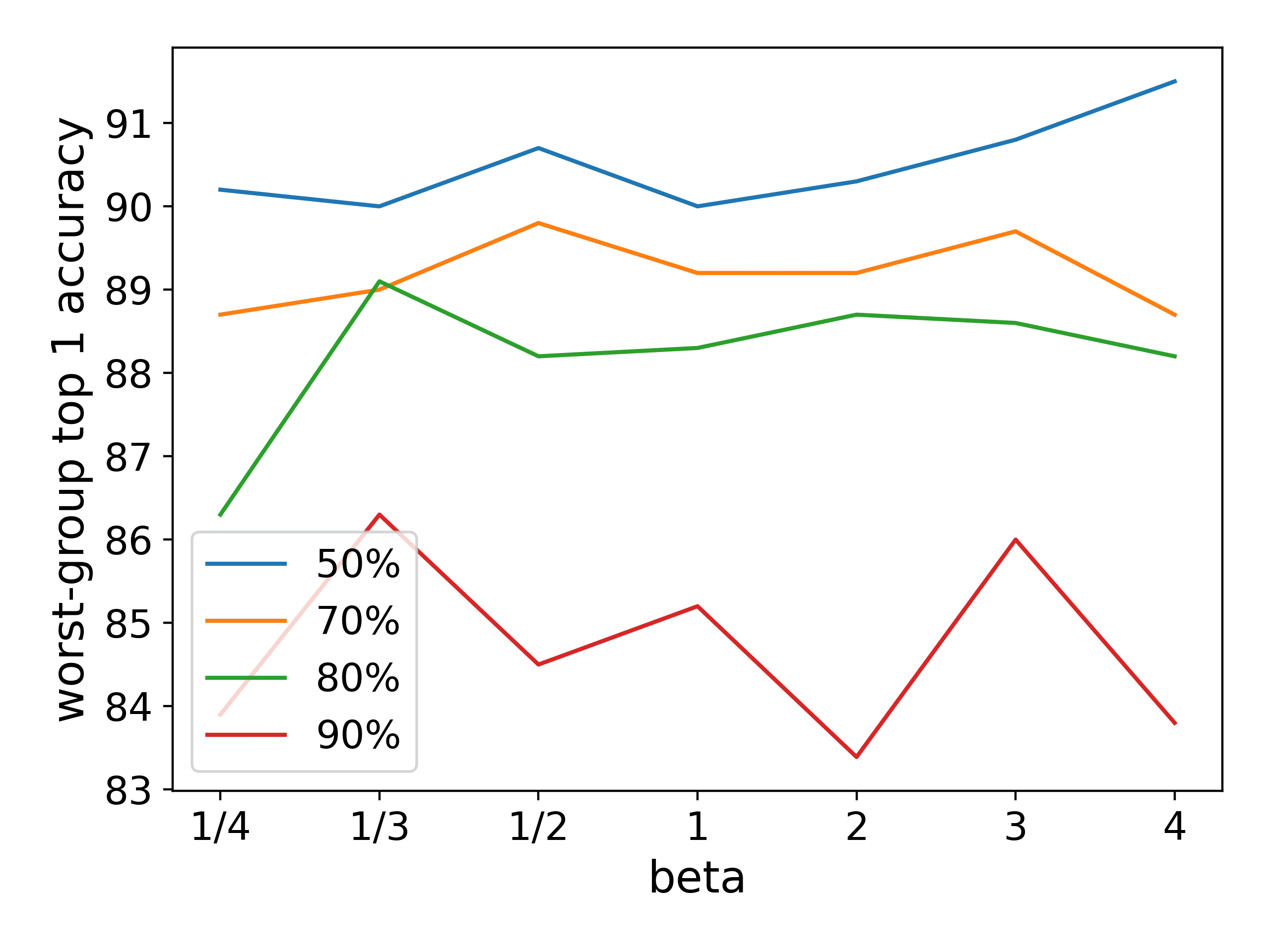}\label{fig:cifar10}} \hfill
    \subfloat[CIFAR100]{\includegraphics[width=0.25\textwidth]{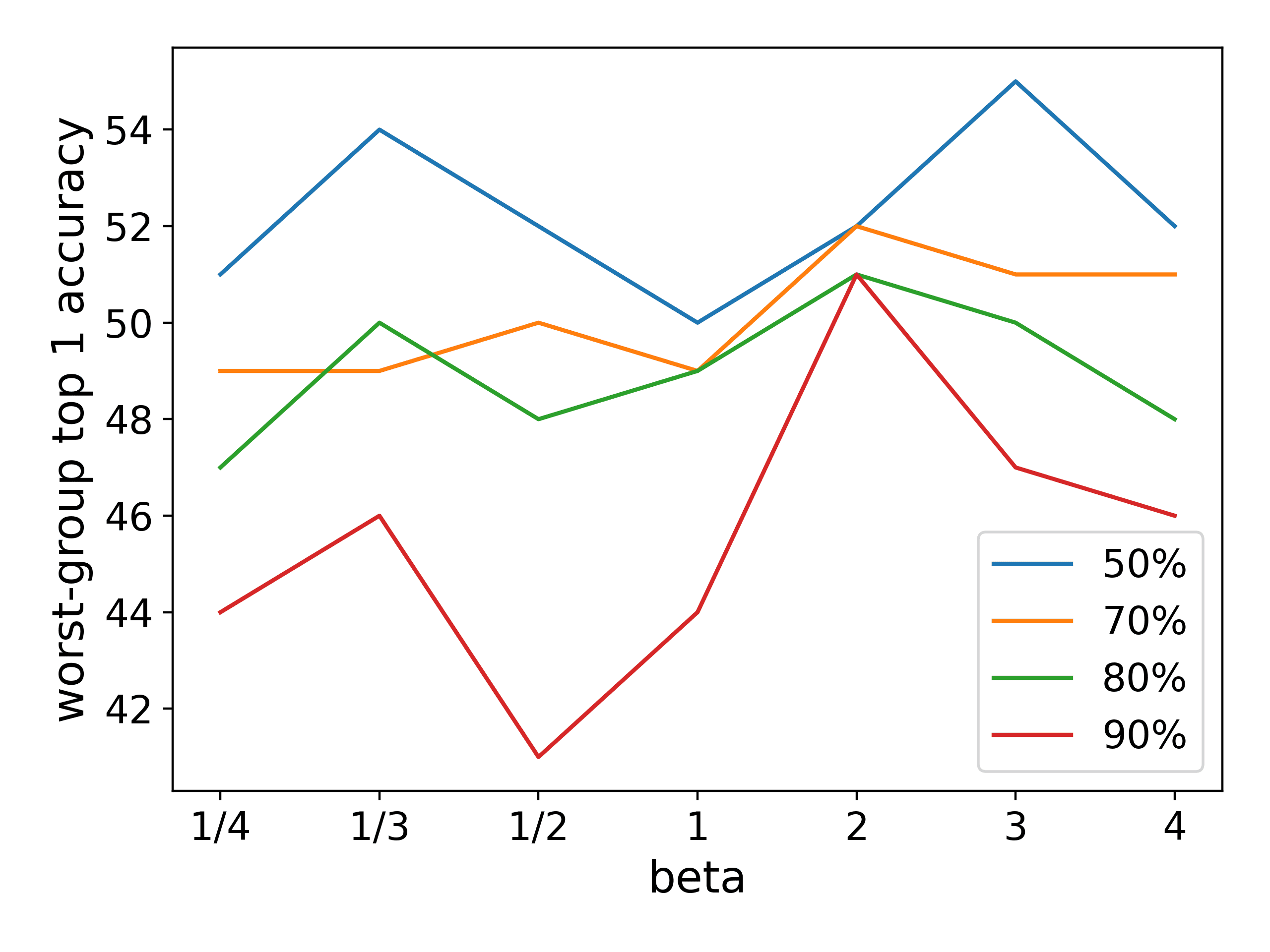}\label{fig:cifar100}} \hfill
    \subfloat[Waterbirds]{\includegraphics[width=0.25\textwidth]{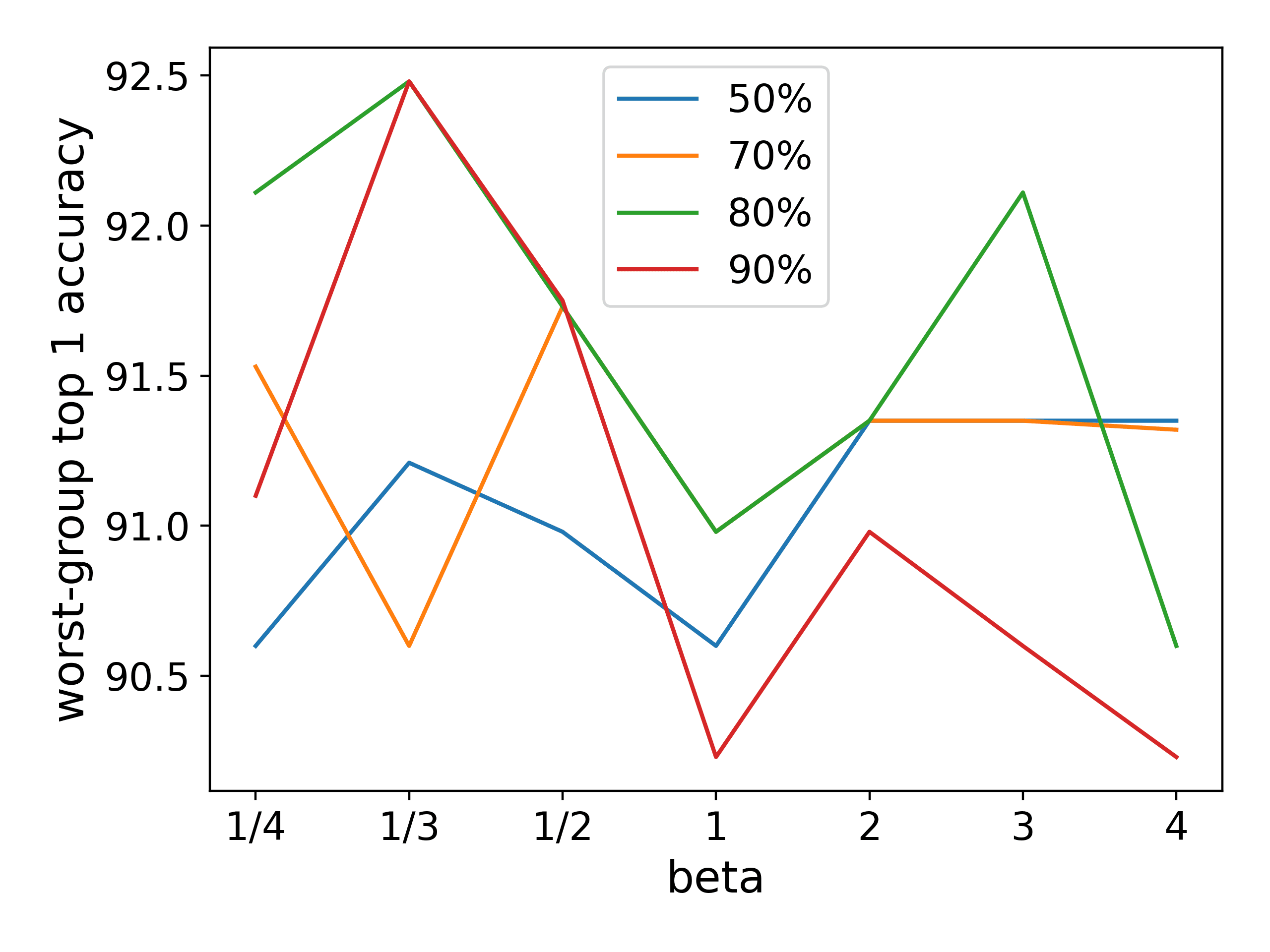}\label{fig:waterbirds}} \hfill
    \subfloat[CelebA]{\includegraphics[width=0.25\textwidth]{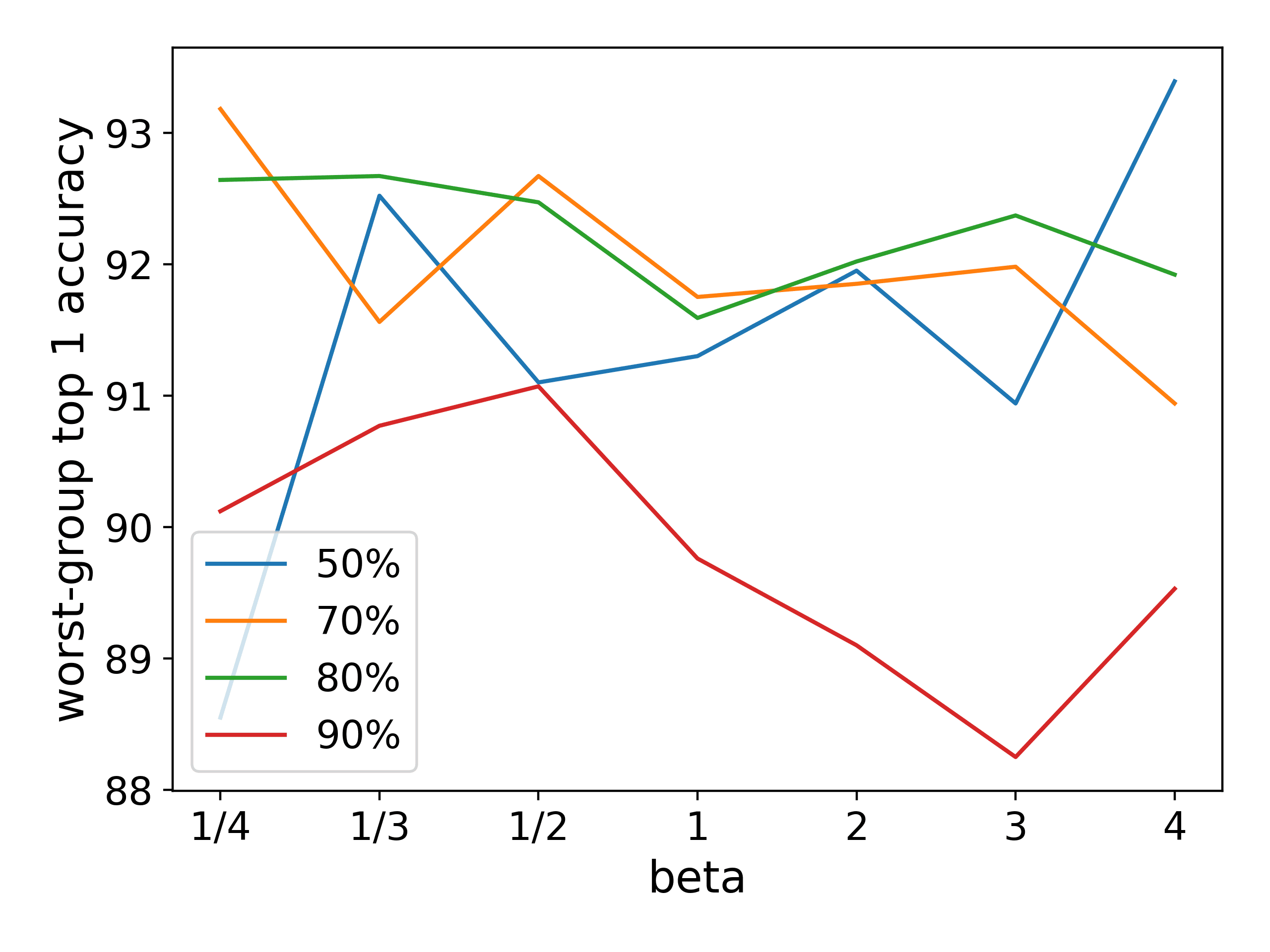}\label{fig:celeba}}
    \caption{Variation of the Softmax temperature hyper-parameter, $\beta$, across different pruning rates over four datasets: CIFAR10, CIFAR100, Waterbirds, and CelebA.}
    \label{fig:beta_ablation}
\end{figure*}

\subsection{Performance with Varying $\beta$}
The Softmax temperature hyper-parameter, $\beta$, is the sole hyper-parameter in RCAP and plays a critical role in determining the sampling probabilities, thereby influencing the overall performance of our algorithm. 
Specifically, $\beta$ controls the sharpness of the sampling distribution with values $>1$ promoting a more uniform sampling strategy, while values $<1$ prioritizing samples having high loss by assigning them higher sampling probabilities.
To evaluate the impact of $\beta$ on RCAP's performance, we conduct an ablation study by varying $\beta$ within the range $[\frac{1}{4}, 4]$ across four datasets, CIFAR10, CIFAR100, Waterbirds, and CelebA over four different pruning rates, $50\%$, $70\%$, $80\%$, and $90\%$. 
The results, presented in Fig. \ref{fig:beta_ablation}, reveal that the choice of $\beta$ is crucial, especially at higher pruning rates where fewer samples are retained.
Interestingly, at relatively moderate pruning rates $(50\%)$, $\beta>1$ performs better with the performance being more stable across a wider range of $\beta$, allowing for more flexibility in hyper-parameter selection. 
On the other hand, at high pruning rates $(90\%)$, $\beta<1$ achieves better results with the choice of $\beta$ becoming increasingly critical as fewer samples are retained. 
Suboptimal values lead to sharp performance drops, particularly in Waterbirds and CIFAR100.
These findings suggest that $\beta$ must be carefully tuned based on dataset characteristics and pruning rates. 
We recommend that starting with $\beta = \left\{\frac{1}{3}, \frac{1}{2}, 2, 3\right\}$ provides a strong baseline across most scenarios.

\section{Related Work}\label{sec:related}
Dataset pruning algorithms aim to identify and remove less "informative" examples, minimizing the performance gap between models trained on subsets and full datasets. 
This is achieved through various sample importance estimation metrics that measure the amount of "information" imparted by an example during model training.

\textbf{Geometry based methods} reduce redundancy by leveraging spatial similarity in feature space. 
Popular techniques like Herding \citep{DBLP:conf/icml/Welling09} minimize the distance between coreset and dataset centers, while K-Center Greedy \citep{DBLP:conf/iclr/SenerS18} minimizes the maximum distance to the nearest coreset sample.\\
\textbf{Uncertainty based methods} prioritize low-confidence samples using metrics like least confidence, entropy, and margin \citep{DBLP:conf/iclr/ColemanYMMBLLZ20}. 
For example, \cite{DBLP:conf/nips/ChangLM17} use predictive distribution variance for selection.\\
\textbf{Loss based methods} focus on samples contributing higher loss or gradient values. 
Forgetting events \citep{DBLP:conf/iclr/TonevaSCTBG19}, GraNd, and EL2N scores \citep{paul2021deep} are prominent techniques. 
CCS \citep{DBLP:conf/iclr/ZhengLL023}, a state-of-the-art one-shot coreset selection technique maximizes data distribution coverage while utilizing stratified sampling to form the retain set.
InfoBatch \citep{qin2024infobatch}, a dynamic pruning approach, combines loss thresholds with score based sampling and uniform sampling along with gradient scaling for bias reduction.\\
\textbf{Decision boundary based methods} prioritize samples near decision boundaries. 
Adversarial DeepFool \citep{DBLP:journals/corr/abs-1802-09841} measures perturbations needed to alter predictions, while CAL \citep{DBLP:conf/emnlp/MargatinaVBA21} emphasizes predictive divergence among neighbors.\\
\textbf{Gradient matching based methods} optimize coresets to approximate full-dataset gradients. 
CRAIG \citep{DBLP:conf/icml/MirzasoleimanBL20} and GradMatch \citep{DBLP:conf/icml/KillamsettySRDI21} minimize gradient error, with GradMatch introducing penalties to prevent over-reliance on few samples.\\
\textbf{Bilevel optimization based methods} frame sample selection as an optimization problem. 
Retrieve \citep{DBLP:conf/nips/KillamsettyZCI21} applies this to semi-supervised learning, while Glister \citep{DBLP:conf/aaai/KillamsettySRI21} introduces robustness by adding a validation set on the outer optimization and the log-likelihood in the bilevel optimization.\\
\textbf{Training dynamics incorporating methods} track importance over epochs. 
Dyn-Unc \citep{DBLP:conf/cvpr/HeY0Z22} averages prediction uncertainty across epochs, and TDDS \citep{DBLP:conf/cvpr/ZhangDLXZ24} aligns gradients over the full training run.\\
\textbf{Submodularity based methods} maximize diversity and informativeness through submodular functions like Facility Location and Log Determinant \citep{DBLP:conf/alt/IyerKBA21}. 
Prism \citep{DBLP:journals/corr/abs-2103-00128} targets labeling efficiency in large datasets.\\
\textbf{Proxy based methods} train a proxy model (a smaller, shallower version of the original model) on the entire training dataset to determine the importance of each sample \citep{DBLP:conf/iclr/ColemanYMMBLLZ20,sachdeva2021svp}.\\
\textbf{Random sampling based methods} perform uniform sampling and are tough-to-beat baselines \citep{DBLP:journals/tmlr/AyedH23}. 
RS2 \citep{DBLP:conf/iclr/OkanovicWMNKKGR24} employs dynamic uniform sampling (with and without replacement), while MetriQ \citep{DBLP:journals/corr/abs-2404-05579} adjusts sample fractions by class.

\section{Conclusion}
We present RCAP, a novel, Robust, Class-Aware, Probabilistic dynamic dataset pruning algorithm tailored for classification tasks. 
In every epoch, RCAP applies a closed-form solution to estimate the fraction of samples that need to be included in the training subset for each individual class.
Thereafter, RCAP employs a novel, adaptive sampling strategy that prioritizes samples having a higher loss for populating the class-wise subset.
Our method incurs no computational overhead, achieving an impressive $8.69\times$ speed-up on average across multiple datasets while maintaining $<1\%$ drop in performance with respect to full data training. 
Extensive evaluation on six datasets, ranging from class-balanced to highly imbalanced, across four pruning rates and three distinct training paradigms, shows that RCAP significantly improves worst-group accuracy while maintaining competitive average-group accuracy compared to seven state-of-the-art pruning methods.\\
\textbf{Limitations and Future Scope:} 
RCAP requires a few training epochs to accurately approximate the optimal class-wise fractions, which could hinder its utility in scenarios requiring immediate effectiveness. 
For instance, LLMs are few-shot learners \citep{DBLP:conf/nips/BrownMRSKDNSSAA20} but are computationally expensive to train due to their reliance on massive datasets. 
Therefore, we aim to refine RCAP to reduce approximation error early in training, enhancing its applicability to LLMs and other large-scale models. 
Another important limitation is the $\beta$ hyper-parameter, which currently requires manual selection. 
A promising direction of future work is to make $\beta$ adaptive during training by utilizing the loss values or annealing schedules. 

\bibliography{uai2025-template}

@article{raju2021accelerating,
  title={Accelerating deep learning with dynamic data pruning},
  author={Raju, Ravi S and Daruwalla, Kyle and Lipasti, Mikko},
  journal={arXiv preprint arXiv:2111.12621},
  year={2021}
}

@inproceedings{
    okanovic2024repeated,
    title     = {Repeated Random Sampling for Minimizing the Time-to-Accuracy of Learning},
    author    = {Patrik Okanovic and Roger Waleffe and Vasilis Mageirakos and Konstantinos Nikolakakis and Amin Karbasi and Dionysios Kalogerias and Nezihe Merve G{\"u}rel and Theodoros Rekatsinas},
    booktitle = {The Twelfth International Conference on Learning Representations},
    year      = {2024},
}

@inproceedings{
  qin2024infobatch,
  title={InfoBatch: Lossless Training Speed Up by Unbiased Dynamic Data Pruning},
  author={Qin, Ziheng and Wang, Kai and Zheng, Zangwei and Gu, Jianyang and Peng, Xiangyu and Zhaopan Xu and Zhou, Daquan and Lei Shang and Baigui Sun and Xuansong Xie and You, Yang},
  booktitle={The Twelfth International Conference on Learning Representations},
  year={2024},
}

@article{paul2021deep,
  title={Deep learning on a data diet: Finding important examples early in training},
  author={Paul, Mansheej and Ganguli, Surya and Dziugaite, Gintare Karolina},
  journal={Advances in neural information processing systems},
  volume={34},
  pages={20596--20607},
  year={2021}
}

@inproceedings{DBLP:conf/iclr/0006XP0S023,
  author       = {Shuo Yang and
                  Zeke Xie and
                  Hanyu Peng and
                  Min Xu and
                  Mingming Sun and
                  Ping Li},
  title        = {Dataset Pruning: Reducing Training Data by Examining Generalization
                  Influence},
  booktitle    = {The Eleventh International Conference on Learning Representations,
                  {ICLR} 2023, Kigali, Rwanda, May 1-5, 2023},
  publisher    = {OpenReview.net},
  year         = {2023},
  bibsource    = {dblp computer science bibliography, https://dblp.org}
}

@inproceedings{DBLP:conf/cvpr/ZhangDLXZ24,
  author       = {Xin Zhang and
                  Jiawei Du and
                  Yunsong Li and
                  Weiying Xie and
                  Joey Tianyi Zhou},
  title        = {Spanning Training Progress: Temporal Dual-Depth Scoring {(TDDS)} for
                  Enhanced Dataset Pruning},
  booktitle    = {{IEEE/CVF} Conference on Computer Vision and Pattern Recognition,
                  {CVPR} 2024, Seattle, WA, USA, June 16-22, 2024},
  pages        = {26213--26222},
  publisher    = {{IEEE}},
  year         = {2024},
  doi          = {10.1109/CVPR52733.2024.02477},
  bibsource    = {dblp computer science bibliography, https://dblp.org}
}

@inproceedings{DBLP:conf/icml/YangCGZLZN24,
  author       = {Shuo Yang and
                  Zhe Cao and
                  Sheng Guo and
                  Ruiheng Zhang and
                  Ping Luo and
                  Shengping Zhang and
                  Liqiang Nie},
  title        = {Mind the Boundary: Coreset Selection via Reconstructing the Decision
                  Boundary},
  booktitle    = {Forty-first International Conference on Machine Learning, {ICML} 2024,
                  Vienna, Austria, July 21-27, 2024},
  publisher    = {OpenReview.net},
  year         = {2024},
  bibsource    = {dblp computer science bibliography, https://dblp.org}
}

@inproceedings{DBLP:conf/iclr/SenerS18,
  author       = {Ozan Sener and
                  Silvio Savarese},
  title        = {Active Learning for Convolutional Neural Networks: {A} Core-Set Approach},
  booktitle    = {6th International Conference on Learning Representations, {ICLR} 2018,
                  Vancouver, BC, Canada, April 30 - May 3, 2018, Conference Track Proceedings},
  publisher    = {OpenReview.net},
  year         = {2018},
  bibsource    = {dblp computer science bibliography, https://dblp.org}
}

@inproceedings{DBLP:conf/icml/Welling09,
  author       = {Max Welling},
  title        = {Herding dynamical weights to learn},
  booktitle    = {Proceedings of the 26th Annual International Conference on Machine
                  Learning, {ICML} 2009, Montreal, Quebec, Canada, June 14-18, 2009},
  series       = {{ACM} International Conference Proceeding Series},
  volume       = {382},
  pages        = {1121--1128},
  publisher    = {{ACM}},
  year         = {2009},
  doi          = {10.1145/1553374.1553517},
  bibsource    = {dblp computer science bibliography, https://dblp.org}
}

@inproceedings{DBLP:conf/iclr/ColemanYMMBLLZ20,
  author       = {Cody Coleman and
                  Christopher Yeh and
                  Stephen Mussmann and
                  Baharan Mirzasoleiman and
                  Peter Bailis and
                  Percy Liang and
                  Jure Leskovec and
                  Matei Zaharia},
  title        = {Selection via Proxy: Efficient Data Selection for Deep Learning},
  booktitle    = {8th International Conference on Learning Representations, {ICLR} 2020,
                  Addis Ababa, Ethiopia, April 26-30, 2020},
  publisher    = {OpenReview.net},
  year         = {2020},
  bibsource    = {dblp computer science bibliography, https://dblp.org}
}

@article{sachdeva2021svp,
  title={Svp-cf: Selection via proxy for collaborative filtering data},
  author={Sachdeva, Noveen and Wu, Carole-Jean and McAuley, Julian},
  journal={arXiv preprint arXiv:2107.04984},
  year={2021}
}

@inproceedings{DBLP:conf/cvpr/HeY0Z22,
  author       = {Muyang He and
                  Shuo Yang and
                  Tiejun Huang and
                  Bo Zhao},
  title        = {Large-scale Dataset Pruning with Dynamic Uncertainty},
  booktitle    = {{IEEE/CVF} Conference on Computer Vision and Pattern Recognition,
                  {CVPR} 2024 - Workshops, Seattle, WA, USA, June 17-18, 2024},
  pages        = {7713--7722},
  publisher    = {{IEEE}},
  year         = {2024},
  doi          = {10.1109/CVPRW63382.2024.00767},
  bibsource    = {dblp computer science bibliography, https://dblp.org}
}

@article{DBLP:journals/corr/abs-1802-09841,
  author       = {Melanie Ducoffe and
                  Fr{\'{e}}d{\'{e}}ric Precioso},
  title        = {Adversarial Active Learning for Deep Networks: a Margin Based Approach},
  journal      = {CoRR},
  volume       = {abs/1802.09841},
  year         = {2018},
  eprinttype    = {arXiv},
  eprint       = {1802.09841},
  bibsource    = {dblp computer science bibliography, https://dblp.org}
}

@inproceedings{DBLP:conf/emnlp/MargatinaVBA21,
  author       = {Katerina Margatina and
                  Giorgos Vernikos and
                  Lo{\"{\i}}c Barrault and
                  Nikolaos Aletras},
  title        = {Active Learning by Acquiring Contrastive Examples},
  booktitle    = {Proceedings of the 2021 Conference on Empirical Methods in Natural
                  Language Processing, {EMNLP} 2021, Virtual Event / Punta Cana, Dominican
                  Republic, 7-11 November, 2021},
  pages        = {650--663},
  publisher    = {Association for Computational Linguistics},
  year         = {2021},
  doi          = {10.18653/V1/2021.EMNLP-MAIN.51},
  bibsource    = {dblp computer science bibliography, https://dblp.org}
}

@inproceedings{DBLP:conf/nips/ChangLM17,
  author       = {Haw{-}Shiuan Chang and
                  Erik G. Learned{-}Miller and
                  Andrew McCallum},
  title        = {Active Bias: Training More Accurate Neural Networks by Emphasizing
                  High Variance Samples},
  booktitle    = {Advances in Neural Information Processing Systems 30: Annual Conference
                  on Neural Information Processing Systems 2017, December 4-9, 2017,
                  Long Beach, CA, {USA}},
  pages        = {1002--1012},
  year         = {2017},
  bibsource    = {dblp computer science bibliography, https://dblp.org}
}

@inproceedings{DBLP:conf/iclr/TonevaSCTBG19,
  author       = {Mariya Toneva and
                  Alessandro Sordoni and
                  Remi Tachet des Combes and
                  Adam Trischler and
                  Yoshua Bengio and
                  Geoffrey J. Gordon},
  title        = {An Empirical Study of Example Forgetting during Deep Neural Network
                  Learning},
  booktitle    = {7th International Conference on Learning Representations, {ICLR} 2019,
                  New Orleans, LA, USA, May 6-9, 2019},
  publisher    = {OpenReview.net},
  year         = {2019},
  bibsource    = {dblp computer science bibliography, https://dblp.org}
}

@inproceedings{DBLP:conf/icml/MirzasoleimanBL20,
  author       = {Baharan Mirzasoleiman and
                  Jeff A. Bilmes and
                  Jure Leskovec},
  title        = {Coresets for Data-efficient Training of Machine Learning Models},
  booktitle    = {Proceedings of the 37th International Conference on Machine Learning,
                  {ICML} 2020, 13-18 July 2020, Virtual Event},
  series       = {Proceedings of Machine Learning Research},
  volume       = {119},
  pages        = {6950--6960},
  publisher    = {{PMLR}},
  year         = {2020},
  bibsource    = {dblp computer science bibliography, https://dblp.org}
}

@inproceedings{DBLP:conf/icml/KillamsettySRDI21,
  author       = {KrishnaTeja Killamsetty and
                  Durga Sivasubramanian and
                  Ganesh Ramakrishnan and
                  Abir De and
                  Rishabh K. Iyer},
  title        = {{GRAD-MATCH:} Gradient Matching based Data Subset Selection for Efficient
                  Deep Model Training},
  booktitle    = {Proceedings of the 38th International Conference on Machine Learning,
                  {ICML} 2021, 18-24 July 2021, Virtual Event},
  series       = {Proceedings of Machine Learning Research},
  volume       = {139},
  pages        = {5464--5474},
  publisher    = {{PMLR}},
  year         = {2021},
  bibsource    = {dblp computer science bibliography, https://dblp.org}
}

@inproceedings{DBLP:conf/nips/KillamsettyZCI21,
  author       = {KrishnaTeja Killamsetty and
                  Xujiang Zhao and
                  Feng Chen and
                  Rishabh K. Iyer},
  title        = {{RETRIEVE:} Coreset Selection for Efficient and Robust Semi-Supervised
                  Learning},
  booktitle    = {Advances in Neural Information Processing Systems 34: Annual Conference
                  on Neural Information Processing Systems 2021, NeurIPS 2021, December
                  6-14, 2021, virtual},
  pages        = {14488--14501},
  year         = {2021},
  bibsource    = {dblp computer science bibliography, https://dblp.org}
}

@inproceedings{DBLP:conf/aaai/KillamsettySRI21,
  author       = {KrishnaTeja Killamsetty and
                  Durga Sivasubramanian and
                  Ganesh Ramakrishnan and
                  Rishabh K. Iyer},
  title        = {{GLISTER:} Generalization based Data Subset Selection for Efficient
                  and Robust Learning},
  booktitle    = {Thirty-Fifth {AAAI} Conference on Artificial Intelligence, {AAAI}
                  2021, Thirty-Third Conference on Innovative Applications of Artificial
                  Intelligence, {IAAI} 2021, The Eleventh Symposium on Educational Advances
                  in Artificial Intelligence, {EAAI} 2021, Virtual Event, February 2-9,
                  2021},
  pages        = {8110--8118},
  publisher    = {{AAAI} Press},
  year         = {2021},
  doi          = {10.1609/AAAI.V35I9.16988},
  bibsource    = {dblp computer science bibliography, https://dblp.org}
}

@inproceedings{DBLP:conf/alt/IyerKBA21,
  author       = {Rishabh K. Iyer and
                  Ninad Khargoankar and
                  Jeff A. Bilmes and
                  Himanshu Asanani},
  title        = {Submodular combinatorial information measures with applications in
                  machine learning},
  booktitle    = {Algorithmic Learning Theory, 16-19 March 2021, Virtual Conference,
                  Worldwide},
  series       = {Proceedings of Machine Learning Research},
  volume       = {132},
  pages        = {722--754},
  publisher    = {{PMLR}},
  year         = {2021},
  bibsource    = {dblp computer science bibliography, https://dblp.org}
}

@article{DBLP:journals/corr/abs-2103-00128,
  author       = {Vishal Kaushal and
                  Suraj Kothawade and
                  Ganesh Ramakrishnan and
                  Jeff A. Bilmes and
                  Rishabh K. Iyer},
  title        = {{PRISM:} {A} Unified Framework of Parameterized Submodular Information
                  Measures for Targeted Data Subset Selection and Summarization},
  journal      = {CoRR},
  volume       = {abs/2103.00128},
  year         = {2021},
  eprinttype    = {arXiv},
  eprint       = {2103.00128},
  bibsource    = {dblp computer science bibliography, https://dblp.org}
}

@inproceedings{DBLP:conf/iclr/OkanovicWMNKKGR24,
  author       = {Patrik Okanovic and
                  Roger Waleffe and
                  Vasilis Mageirakos and
                  Konstantinos E. Nikolakakis and
                  Amin Karbasi and
                  Dionysios S. Kalogerias and
                  Nezihe Merve G{\"{u}}rel and
                  Theodoros Rekatsinas},
  title        = {Repeated Random Sampling for Minimizing the Time-to-Accuracy of Learning},
  booktitle    = {The Twelfth International Conference on Learning Representations,
                  {ICLR} 2024, Vienna, Austria, May 7-11, 2024},
  publisher    = {OpenReview.net},
  year         = {2024},
  bibsource    = {dblp computer science bibliography, https://dblp.org}
}

@article{DBLP:journals/tmlr/AyedH23,
  author       = {Fadhel Ayed and
                  Soufiane Hayou},
  title        = {Data pruning and neural scaling laws: fundamental limitations of score-based
                  algorithms},
  journal      = {Trans. Mach. Learn. Res.},
  volume       = {2023},
  year         = {2023},
  bibsource    = {dblp computer science bibliography, https://dblp.org}
}

@article{DBLP:journals/corr/abs-2404-05579,
  author       = {Artem Vysogorets and
                  Kartik Ahuja and
                  Julia Kempe},
  title        = {Robust Data Pruning: Uncovering and Overcoming Implicit Bias},
  journal      = {CoRR},
  volume       = {abs/2404.05579},
  year         = {2024},
  doi          = {10.48550/ARXIV.2404.05579},
  eprinttype    = {arXiv},
  eprint       = {2404.05579},
  bibsource    = {dblp computer science bibliography, https://dblp.org}
}

@article{krizhevsky2009learning,
  title={Learning multiple layers of features from tiny images},
  author={Krizhevsky, Alex and Hinton, Geoffrey and others},
  year={2009},
  publisher={Toronto, ON, Canada}
}

@inproceedings{DBLP:conf/iccv/LiuLWT15,
  author       = {Ziwei Liu and
                  Ping Luo and
                  Xiaogang Wang and
                  Xiaoou Tang},
  title        = {Deep Learning Face Attributes in the Wild},
  booktitle    = {2015 {IEEE} International Conference on Computer Vision, {ICCV} 2015,
                  Santiago, Chile, December 7-13, 2015},
  pages        = {3730--3738},
  publisher    = {{IEEE} Computer Society},
  year         = {2015},
  doi          = {10.1109/ICCV.2015.425},
  bibsource    = {dblp computer science bibliography, https://dblp.org}
}

@article{DBLP:journals/corr/abs-1911-08731,
  author       = {Shiori Sagawa and
                  Pang Wei Koh and
                  Tatsunori B. Hashimoto and
                  Percy Liang},
  title        = {Distributionally Robust Neural Networks for Group Shifts: On the Importance
                  of Regularization for Worst-Case Generalization},
  journal      = {CoRR},
  volume       = {abs/1911.08731},
  year         = {2019},
  eprinttype    = {arXiv},
  eprint       = {1911.08731},
  bibsource    = {dblp computer science bibliography, https://dblp.org}
}

@inproceedings{DBLP:conf/cvpr/HeZRS16,
  author       = {Kaiming He and
                  Xiangyu Zhang and
                  Shaoqing Ren and
                  Jian Sun},
  title        = {Deep Residual Learning for Image Recognition},
  booktitle    = {2016 {IEEE} Conference on Computer Vision and Pattern Recognition,
                  {CVPR} 2016, Las Vegas, NV, USA, June 27-30, 2016},
  pages        = {770--778},
  publisher    = {{IEEE} Computer Society},
  year         = {2016},
  doi          = {10.1109/CVPR.2016.90},
  bibsource    = {dblp computer science bibliography, https://dblp.org}
}

@inproceedings{DBLP:conf/iccv/MullerH21,
  author       = {Samuel G. M{\"{u}}ller and
                  Frank Hutter},
  title        = {TrivialAugment: Tuning-free Yet State-of-the-Art Data Augmentation},
  booktitle    = {2021 {IEEE/CVF} International Conference on Computer Vision, {ICCV}
                  2021, Montreal, QC, Canada, October 10-17, 2021},
  pages        = {754--762},
  publisher    = {{IEEE}},
  year         = {2021},
  doi          = {10.1109/ICCV48922.2021.00081},
  bibsource    = {dblp computer science bibliography, https://dblp.org}
}

@inproceedings{DBLP:conf/iclr/DosovitskiyB0WZ21,
  author       = {Alexey Dosovitskiy and
                  Lucas Beyer and
                  Alexander Kolesnikov and
                  Dirk Weissenborn and
                  Xiaohua Zhai and
                  Thomas Unterthiner and
                  Mostafa Dehghani and
                  Matthias Minderer and
                  Georg Heigold and
                  Sylvain Gelly and
                  Jakob Uszkoreit and
                  Neil Houlsby},
  title        = {An Image is Worth 16x16 Words: Transformers for Image Recognition
                  at Scale},
  booktitle    = {9th International Conference on Learning Representations, {ICLR} 2021,
                  Virtual Event, Austria, May 3-7, 2021},
  publisher    = {OpenReview.net},
  year         = {2021},
  bibsource    = {dblp computer science bibliography, https://dblp.org}
}

@inproceedings{DBLP:conf/icml/RadfordKXBMS23,
  author       = {Alec Radford and
                  Jong Wook Kim and
                  Tao Xu and
                  Greg Brockman and
                  Christine McLeavey and
                  Ilya Sutskever},
  title        = {Robust Speech Recognition via Large-Scale Weak Supervision},
  booktitle    = {International Conference on Machine Learning, {ICML} 2023, 23-29 July
                  2023, Honolulu, Hawaii, {USA}},
  series       = {Proceedings of Machine Learning Research},
  volume       = {202},
  pages        = {28492--28518},
  publisher    = {{PMLR}},
  year         = {2023},
  bibsource    = {dblp computer science bibliography, https://dblp.org}
}

@inproceedings{DBLP:conf/nips/BaevskiZMA20,
  author       = {Alexei Baevski and
                  Yuhao Zhou and
                  Abdelrahman Mohamed and
                  Michael Auli},
  title        = {wav2vec 2.0: {A} Framework for Self-Supervised Learning of Speech
                  Representations},
  booktitle    = {Advances in Neural Information Processing Systems 33: Annual Conference
                  on Neural Information Processing Systems 2020, NeurIPS 2020, December
                  6-12, 2020, virtual},
  year         = {2020},
  bibsource    = {dblp computer science bibliography, https://dblp.org}
}

@inproceedings{DBLP:conf/nips/BrownMRSKDNSSAA20,
  author       = {Tom B. Brown and
                  Benjamin Mann and
                  Nick Ryder and
                  Melanie Subbiah and
                  Jared Kaplan and
                  Prafulla Dhariwal and
                  Arvind Neelakantan and
                  Pranav Shyam and
                  Girish Sastry and
                  Amanda Askell and
                  Sandhini Agarwal and
                  Ariel Herbert{-}Voss and
                  Gretchen Krueger and
                  Tom Henighan and
                  Rewon Child and
                  Aditya Ramesh and
                  Daniel M. Ziegler and
                  Jeffrey Wu and
                  Clemens Winter and
                  Christopher Hesse and
                  Mark Chen and
                  Eric Sigler and
                  Mateusz Litwin and
                  Scott Gray and
                  Benjamin Chess and
                  Jack Clark and
                  Christopher Berner and
                  Sam McCandlish and
                  Alec Radford and
                  Ilya Sutskever and
                  Dario Amodei},
  title        = {Language Models are Few-Shot Learners},
  booktitle    = {Advances in Neural Information Processing Systems 33: Annual Conference
                  on Neural Information Processing Systems 2020, NeurIPS 2020, December
                  6-12, 2020, virtual},
  year         = {2020},
  bibsource    = {dblp computer science bibliography, https://dblp.org}
}

@article{radford2019language,
  title={Language models are unsupervised multitask learners},
  author={Radford, Alec and Wu, Jeffrey and Child, Rewon and Luan, David and Amodei, Dario and Sutskever, Ilya and others},
  journal={OpenAI blog},
  volume={1},
  number={8},
  pages={9},
  year={2019}
}

@article{DBLP:journals/corr/abs-2303-08774,
  author       = {OpenAI},
  title        = {{GPT-4} Technical Report},
  journal      = {CoRR},
  volume       = {abs/2303.08774},
  year         = {2023},
  doi          = {10.48550/ARXIV.2303.08774},
  eprinttype    = {arXiv},
  eprint       = {2303.08774},
  bibsource    = {dblp computer science bibliography, https://dblp.org}
}

@inproceedings{DBLP:conf/icml/MindermannBRS0X22,
  author       = {S{\"{o}}ren Mindermann and
                  Jan Markus Brauner and
                  Muhammed Razzak and
                  Mrinank Sharma and
                  Andreas Kirsch and
                  Winnie Xu and
                  Benedikt H{\"{o}}ltgen and
                  Aidan N. Gomez and
                  Adrien Morisot and
                  Sebastian Farquhar and
                  Yarin Gal},
  title        = {Prioritized Training on Points that are Learnable, Worth Learning,
                  and not yet Learnt},
  booktitle    = {International Conference on Machine Learning, {ICML} 2022, 17-23 July
                  2022, Baltimore, Maryland, {USA}},
  series       = {Proceedings of Machine Learning Research},
  volume       = {162},
  pages        = {15630--15649},
  publisher    = {{PMLR}},
  year         = {2022},
  bibsource    = {dblp computer science bibliography, https://dblp.org}
}

@inproceedings{DBLP:conf/icml/XiaLZWWL24,
  author       = {Xiaobo Xia and
                  Jiale Liu and
                  Shaokun Zhang and
                  Qingyun Wu and
                  Hongxin Wei and
                  Tongliang Liu},
  title        = {Refined Coreset Selection: Towards Minimal Coreset Size under Model
                  Performance Constraints},
  booktitle    = {Forty-first International Conference on Machine Learning, {ICML} 2024,
                  Vienna, Austria, July 21-27, 2024},
  publisher    = {OpenReview.net},
  year         = {2024},
  bibsource    = {dblp computer science bibliography, https://dblp.org}
}

@inproceedings{DBLP:conf/wacv/ZhaoB23,
  author       = {Bo Zhao and
                  Hakan Bilen},
  title        = {Dataset Condensation with Distribution Matching},
  booktitle    = {{IEEE/CVF} Winter Conference on Applications of Computer Vision, {WACV}
                  2023, Waikoloa, HI, USA, January 2-7, 2023},
  pages        = {6503--6512},
  publisher    = {{IEEE}},
  year         = {2023},
  doi          = {10.1109/WACV56688.2023.00645},
  bibsource    = {dblp computer science bibliography, https://dblp.org}
}

@inproceedings{DBLP:conf/cvpr/Cazenavette00EZ22a,
  author       = {George Cazenavette and
                  Tongzhou Wang and
                  Antonio Torralba and
                  Alexei A. Efros and
                  Jun{-}Yan Zhu},
  title        = {Dataset Distillation by Matching Training Trajectories},
  booktitle    = {{IEEE/CVF} Conference on Computer Vision and Pattern Recognition Workshops,
                  {CVPR} Workshops 2022, New Orleans, LA, USA, June 19-20, 2022},
  pages        = {4749--4758},
  publisher    = {{IEEE}},
  year         = {2022},
  doi          = {10.1109/CVPRW56347.2022.00521},
  bibsource    = {dblp computer science bibliography, https://dblp.org}
}

@InProceedings{pmlr-v97-tan19a,
  title = 	 {{E}fficient{N}et: Rethinking Model Scaling for Convolutional Neural Networks},
  author =       {Tan, Mingxing and Le, Quoc},
  booktitle = 	 {Proceedings of the 36th International Conference on Machine Learning},
  pages = 	 {6105--6114},
  year = 	 {2019},
  volume = 	 {97},
  series = 	 {Proceedings of Machine Learning Research},
  month = 	 {09--15 Jun},
  publisher =    {PMLR},
}

@inproceedings{DBLP:conf/iclr/ZhengLL023,
  author       = {Haizhong Zheng and
                  Rui Liu and
                  Fan Lai and
                  Atul Prakash},
  title        = {Coverage-centric Coreset Selection for High Pruning Rates},
  booktitle    = {The Eleventh International Conference on Learning Representations,
                  {ICLR} 2023, Kigali, Rwanda, May 1-5, 2023},
  publisher    = {OpenReview.net},
  year         = {2023},
  bibsource    = {dblp computer science bibliography, https://dblp.org}
}

@INPROCEEDINGS{5206848,
  author={Deng, Jia and Dong, Wei and Socher, Richard and Li, Li-Jia and Kai Li and Li Fei-Fei},
  booktitle={2009 IEEE Conference on Computer Vision and Pattern Recognition}, 
  title={ImageNet: A large-scale hierarchical image database}, 
  year={2009},
  volume={},
  number={},
  pages={248-255},
  doi={10.1109/CVPR.2009.5206848}}

@InProceedings{Horn_2018_CVPR,
author = {
Van Horn, Grant and Mac Aodha, Oisin and Song, Yang and Cui, Yin and Sun, Chen
and Shepard, Alex and Adam, Hartwig and Perona, Pietro and Belongie, Serge},
title = {The INaturalist Species Classification and Detection Dataset},
booktitle = {
The IEEE Conference on Computer Vision and Pattern Recognition (CVPR)},
month = {June},
year = {2018}
}

@inproceedings{DBLP:conf/cvpr/RadosavovicKGHD20,
  author       = {Ilija Radosavovic and
                  Raj Prateek Kosaraju and
                  Ross B. Girshick and
                  Kaiming He and
                  Piotr Doll{\'{a}}r},
  title        = {Designing Network Design Spaces},
  booktitle    = {2020 {IEEE/CVF} Conference on Computer Vision and Pattern Recognition,
                  {CVPR} 2020, Seattle, WA, USA, June 13-19, 2020},
  pages        = {10425--10433},
  publisher    = {Computer Vision Foundation / {IEEE}},
  year         = {2020},
  doi          = {10.1109/CVPR42600.2020.01044},
  bibsource    = {dblp computer science bibliography, https://dblp.org}
}

@inproceedings{li2023rethinking,
  title={Rethinking vision transformers for mobilenet size and speed},
  author={Li, Yanyu and Hu, Ju and Wen, Yang and Evangelidis, Georgios and Salahi, Kamyar and Wang, Yanzhi and Tulyakov, Sergey and Ren, Jian},
  booktitle={Proceedings of the IEEE/CVF International Conference on Computer Vision},
  pages={16889--16900},
  year={2023}
}

@inproceedings{loshchilov2022sgdr,
  title={SGDR: Stochastic Gradient Descent with Warm Restarts},
  author={Loshchilov, Ilya and Hutter, Frank},
  booktitle={International Conference on Learning Representations},
  year={2022}
}

@article{DBLP:journals/corr/LoshchilovH15,
  author       = {Ilya Loshchilov and
                  Frank Hutter},
  title        = {Online Batch Selection for Faster Training of Neural Networks},
  journal      = {CoRR},
  volume       = {abs/1511.06343},
  year         = {2015},
  url          = {http://arxiv.org/abs/1511.06343},
  eprinttype    = {arXiv},
  eprint       = {1511.06343},
  bibsource    = {dblp computer science bibliography, https://dblp.org}
}

@inproceedings{DBLP:journals/corr/SchaulQAS15,
  author       = {Tom Schaul and
                  John Quan and
                  Ioannis Antonoglou and
                  David Silver},
  editor       = {Yoshua Bengio and
                  Yann LeCun},
  title        = {Prioritized Experience Replay},
  booktitle    = {4th International Conference on Learning Representations, {ICLR} 2016,
                  San Juan, Puerto Rico, May 2-4, 2016, Conference Track Proceedings},
  year         = {2016},
  url          = {http://arxiv.org/abs/1511.05952},
  bibsource    = {dblp computer science bibliography, https://dblp.org}
}

@article{DBLP:journals/corr/abs-1910-00762,
  author       = {Angela H. Jiang and
                  Daniel L.{-}K. Wong and
                  Giulio Zhou and
                  David G. Andersen and
                  Jeffrey Dean and
                  Gregory R. Ganger and
                  Gauri Joshi and
                  Michael Kaminsky and
                  Michael Kozuch and
                  Zachary C. Lipton and
                  Padmanabhan Pillai},
  title        = {Accelerating Deep Learning by Focusing on the Biggest Losers},
  journal      = {CoRR},
  volume       = {abs/1910.00762},
  year         = {2019},
  url          = {http://arxiv.org/abs/1910.00762},
  eprinttype    = {arXiv},
  eprint       = {1910.00762},
  bibsource    = {dblp computer science bibliography, https://dblp.org}
}

\newpage

\onecolumn

\title{RCAP: Robust, Class-Aware, Probabilistic Dynamic Dataset Pruning\\(Supplementary Material)}
\maketitle
\appendix
\section{Additional Simulation Results}\label{sec:simulation}
To further support our argument, we train a small feed-forward neural network using the Adam optimizer on a toy, three-class classification dataset consisting of $90$ examples and provide a plot of the loss and gradient norm of $10$ randomly selected examples over a $100$ epoch training run. 
The figure demonstrates that there indeed is a monotonic relation.

\begin{figure*}[h]
    \centering
    \includegraphics[width=0.9\linewidth]{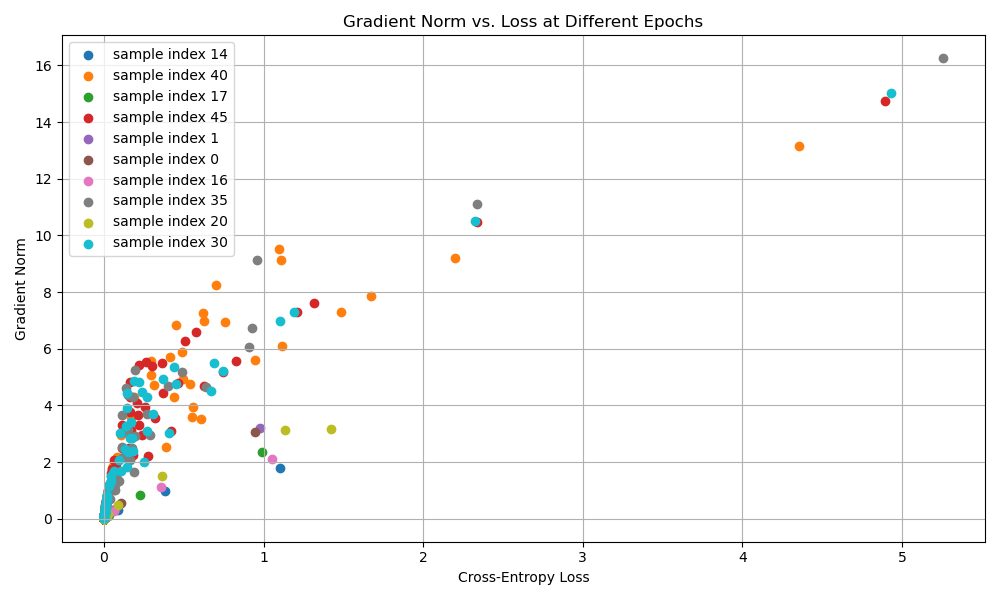}
    \caption{Visualizing the relationship between cross-entropy loss against gradient norm.}
    \label{fig:loss_vs_grad_norm}
\end{figure*}

\section{Training Details}\label{sec:supp-training-details}
We run all our tasks on a single NVIDIA A100 GPU in combination with an Intel Xeon processor. 
We use the Pytorch Lightning library to implement all methods. 
Each reported result is averaged over three different runs usings seeds, $0,27,100$.
Apart from standard image augmentations, we also employ TrivialAugmentWide \cite{DBLP:conf/iccv/MullerH21}. 
In all our experiments, we use the CosineAnnealing Scheduler \cite{loshchilov2022sgdr}.
\begin{table}[ht]
    \smaller
    \centering
    \caption{All the training details required to reproduce our results.}
    \label{tab:reproduce}
    \begin{tabular}{cccccccc}
    \toprule
        Dataset & Model & Augmentations & Optimizer & LR & Weight Decay & Batch Size & Epochs\\
    \toprule
        CIFAR10 & ResNet18 & \shortstack[c]{RandomCrop\\ RandomHorizontalFLip} & \shortstack[c]{SGD \\ momentum$=0.9$} & $0.1$ & $5e^{-4}$ & $128$ & $200$\\
        \midrule
        CIFAR100 & ResNet18 & \shortstack[c]{RandomCrop\\ RandomHorizontalFLip} & \shortstack[c]{SGD \\ momentum$=0.9$} & $0.1$ & $5e^{-4}$ & $128$ & $200$\\
        \midrule
        ImageNet & \shortstack[c]{Frozen dinov2\_vitb14\_reg\\ with two linear layers\\$2304\to512\to1000$} & \shortstack[c]{Resize\\ CenterCrop\\ TrivialAugmentWide} & \shortstack[c]{AdamW} & $0.001$ & $-$ & $256$ & $10$\\
        \midrule
        Waterbirds & \shortstack[c]{pretrained\\ efficientnet\_b3} & \shortstack[c]{Resize\\ RandomCrop\\RandomHorizontalFlip\\ TrivialAugmentWide} & \shortstack[c]{AdamW} & $0.00004$ & $5e^{-4}$ & $32$ & $300$\\
        \midrule
        CelebA & EfficientFormerV2 & \shortstack[c]{CenterCrop\\RandomHorizontalFlip\\ TrivialAugmentWide} & \shortstack[c]{AdamW} & $0.001$ & $5e^{-4}$ & $256$ & $5$\\
        \midrule
        iNaturalist & \shortstack[c]{pretrained\\ ResNet50} & \shortstack[c]{Resize\\ CenterCrop\\RandomHorizontalFlip\\ TrivialAugmentWide} & \shortstack[c]{AdamW} & $0.001$ & $5e^{-4}$ & $256$ & $5$\\
    \bottomrule
    \end{tabular}
\end{table}

\begin{table}[ht]
    \centering
    \caption{$\beta$ values used across all datasets and pruning rates.}
    \label{tab:reproduce_RCAP}
    \begin{tabular}{ccccc}
    \toprule
        Dataset & $50\%$ & $70\%$ & $80\%$ & $90\%$\\
    \toprule
        CIFAR10 & $3$ & $1$ & $2$ & $1$\\
        \midrule
        CIFAR100 & $3$ & $2$ & $2$ & $2$\\
        \midrule
        ImageNet & $\frac{1}{3}$ & $4$ & $2$ & $2$\\
        \midrule
        Waterbirds & $3$ & $\frac{1}{2}$ & $\frac{1}{2}$ & $\frac{1}{2}$\\
        \midrule
        CelebA & $2$ & $2$ & $3$ & $1$\\
        \midrule
        iNaturalist & $2$ & $3$ & $3$ & $\frac{1}{3}$\\
    \bottomrule
    \end{tabular}
\end{table}

\end{document}